\newtheorem{lemma}{Lemma}
\newtheorem{theorem}{Theorem}
\renewcommand{\eqref}[1]{Equation~(\ref{#1})}
\newcommand{\figref}[1]{Figure~\ref{#1}}
\newcommand{\secref}[1]{Section~\ref{#1}}
\newcommand{\thmref}[1]{Theorem~\ref{#1}}
\newcommand{\lemref}[1]{Lemma~\ref{#1}}
\newcommand{\algref}[1]{Algorithm~\ref{#1}}
\newcommand{\cM}{\mathcal{M}}
\newcommand{\cN}{\mathcal{N}}
\newcommand{\bE}{\mathbb{E}}
\newcommand{\reals}{\mathbb{R}}
\newcommand{\tr}{\mathrm{tr}}
\pgfplotsset{compat=newest} 
\pgfplotsset{plot coordinates/math parser=false} 
\newlength\figureheight 
\newlength\figurewidth
\newcommand{\tx}{\tilde{x}}
\newcommand{\tw}{\tilde{w}}
\newcommand{\tell}{\tilde{\ell}}
\newcommand{\tL}{\tilde{L}}
\newcommand{\tu}{\tilde{u}}
\newcommand{\tU}{\tilde{U}}
\newcommand{\tv}{\tilde{v}}
\newcommand{\tV}{\tilde{V}}
\newcommand{\tW}{\tilde{W}}
\newcommand{\tsig}{\tilde{\sigma}}
\newcommand{\tSig}{\tilde{\Sigma}}
\newcommand{\tX}{\tilde{X}}
\newcommand{\tA}{\tilde{A}}
\newcommand{\tZ}{\tilde{Z}}
\newcommand{\tkappa}{\tilde{\kappa}}
\newcommand{\hkappa}{\hat{\kappa}}
\newcommand{\R}{\mathbb{R}}
\author{Alon Gonen\footnote{School of Computer Science, The Hebrew University, Jerusalem, Israel} \and Francesco Orabona\footnote{Yahoo Labs, New York, NY, USA} \and Shai Shalev-Shwartz\footnote{School of Computer Science, The Hebrew University, Jerusalem, Israel}}
\title{Solving Ridge Regression using\\Sketched Preconditioned SVRG}
\begin{document}

\maketitle

\begin{abstract} 
We develop a novel preconditioning method for ridge regression, based on recent linear sketching methods. By equipping Stochastic Variance Reduced Gradient (SVRG) with this preconditioning process, we obtain a significant speed-up relative to fast stochastic methods such as SVRG, SDCA and SAG.
\end{abstract}

\section{Introduction}  \label{sec:intro}
Consider the \emph{ridge regression} problem:
\begin{equation}
\label{eq:ridge}
\min_{w \in \reals^d}\  \left\{L(w) = \frac{1}{n}\sum_{i=1}^n \frac{1}{2}(w^\top x_i-y_i)^2 + \frac{\lambda}{2} \|w\|^2\right\},
\end{equation}
where $\lambda>0$ is a regularization parameter, $x_i\in\R^d$ and $y_i\in \R$ for $i=1,\cdots,n$ the training data. We focus on the large scale regime, where both $n$ and $d$ are large. In this setting, stochastic iterative methods such as SDCA~\cite{shalev2013stochastic}, SVRG~\cite{johnson2013accelerating}, and SAG~\cite{roux2012stochastic} have become a standard choice for minimizing the objective $L$. Specifically, the overall complexity of a recent improved variant of SVRG due to \cite{xiao2014proximal} depends on the average condition number, which is defined as follows. Denote the empirical correlation matrix and its eigenvalue decomposition by
\begin{equation} \label{eq:covSpec}
C := \frac{1}{n} \sum_{i=1} ^n x_i x_i^\top = \sum_{i=1} ^d \lambda_i u_i u_i^\top~.
\end{equation}
The average condition number of $C+\lambda I$ is defined as the ratio between the trace of the Hessian of $L$ and its minimal eigenvalue:
\begin{equation} \label{eq:initialCondition}
\hkappa :=\hkappa(C+\lambda I) = \frac{\tr(C+\lambda I)}{\lambda_d(C+\lambda I)} = \sum_{i=1}^d \frac{\lambda_i+\lambda}{\lambda_d+\lambda}  ~.
\end{equation}
The mentioned variant of SVRG finds an $\epsilon$-approximate minimizer of $L$ in time $\tilde{O}((\hkappa+n)d \log(1/\epsilon))$. Namely, the output of the algorithm, denoted $\hat{w}$, satisfies $\bE[L(\hat{w})]-L(w^\star) \le \epsilon$, where the expectation is over the randomness of the algorithm. 
For an accelerated version of the algorithm, we can replace $\hkappa$ by $\sqrt{n \hkappa}$~\cite{shalev2014accelerated,lin2015universal}. 

The regularization parameter, $\lambda$, increases the smallest eigenvalue of $C+\lambda I$ to be at least $\lambda$, thus improves the condition number and makes the optimization problem easier.
However, to control the under/over fitting tradeoff, $\lambda$ has to decrease as $n$ increases~\cite{shalev2014understanding}.
Moreover, in many machine learning applications $\lambda_d$ approaches zero and it is usually smaller than the value of $\lambda$.
%We will assume that this is the case throughout this paper.
Overall, this yields a large condition number in most of the interesting cases.

A well-known approach for reducing the average condition number is \emph{preconditioning}. Concretely, for a (symmetric) positive definite (pd) matrix $P \in \reals^{d \times d}$, we define the preconditioned optimization problem as 
\begin{equation}
\label{eq:condRidge}
\min_{\tw \in \reals^d} \tL(\tw):= L(P^{-1/2} \tw) ~.
\end{equation}
Note that $\tw$ is an $\epsilon$-approximate minimizer of $\tL$ if and only if $w = P^{-1/2} \tw$ forms an $\epsilon$-approximate minimizer of $L$. Hence, we can minimize \eqref{eq:condRidge} rather than \eqref{eq:ridge}. As we shall see, the structure of the objective allows us to apply the preconditioning directly to the data (as a preprocessing step) and consequently rewrite the preconditioned objective as a ridge regression problem with respect to the preconditioned data (see \secref{sec:regCond}). For a suitable choice of a matrix $P$, the average condition number is significantly reduced. Precisely, as will be apparent from the analysis,
%the average condition number that corresponds to (\ref{eq:ridgeCond}) is $\frac{\tr(P^{-1/2} (C+\lambda I) P^{-1/2})}{\lambda_d(P^{-1/2} (C+\lambda_I) P^{-1/2})}$ (see \thmref{thm:condEffect}). It is easily seen that 
the pd matrix that minimizes the average condition number is $P=C+\lambda I$, and the corresponding average condition number is $d$. However, we note that such preconditioning process would require both the computation of $P^{-1/2}$ and the computation of $P^{-1/2} x_i$ for each $i \in [n]$. By first order conditions, computing $(C+\lambda I)^{-1/2}$ is equivalent to solving the original problem in \eqref{eq:ridge}, rendering this ``optimal'' preconditioner useless.
% In general Preconditioning is the process of transforming a problem into a related problem which has a smaller condition number. At the very least, this transformation should be invertible so that any approximately optimal solution in the conditioned space can be translated into an approximately optimal solution in the original space. This process usually requires a matrix inversion. Therefore, if computational efficiency is our main concern, we must develop ``cheap'' conditioners which can be inverted relatively fast. For example, for the case of ridge regression, there is no sense in devising a preconditioner that requires the inversion of $C+\lambda I$ (since exactly minimizing $L$ is not more difficult than computing this inverse).

Yet, the optimal preconditioner might not needed in many cases.
In fact, a common empirical observation (see \secref{sec:empirical}) is that (high-dimensional) machine learning problems tend to have few dominant features, while the other coordinates are strongly correlated with the stronger features. As a result, the spectrum of the correlation matrix decays very fast. Hence, it is natural to expect to gain a lot from devising preconditioning methods that focus on the stronger directions of the data.

Our contributions are as follows. We develop a relatively cheap preconditioning method that, coupled with SVRG,
assures to speed-up the convergence in practical applications while having a computational cost comparable to SVRG alone.
In order to approximately extract the stronger directions while incurring a low computational cost, we rely on a variant of the Block Lanczos method due to \cite{musco2015stronger} in order to compute an approximated truncated SVD (Singular Value Decomposition) of the correlation matrix $C$. Finally, by equipping SVRG with this preconditioner, we obtain our main result.

\section{Main Result} \label{sec:mainResult}
\begin{theorem} \label{thm:main}
Let $k \in [d]$ be a given parameter and assume that the regularization parameter, $\lambda$, is larger than $\lambda_d$. Our preconditioning process runs in time $O(ndk \log(n))$.
By equipping the SVRG of \cite{xiao2014proximal} with this preconditioner, we find an $\epsilon$-approximate minimizer for \eqref{eq:ridge} (with probability at least $9/10$) in additional runtime of $O ((\tilde{\kappa}  + n + d )d \log(1/\epsilon) )$, where $\tilde{\kappa}=\frac{k \lambda_k+ \sum_{i>k} \lambda_i}{\lambda}$ or $\tilde{\kappa} = \left(\frac{n(k \lambda_k+ \sum_{i>k} \lambda_i)}{\lambda}\right)^{1/2}$ if we use accelerated SVRG.
\end{theorem}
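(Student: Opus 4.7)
The approach is to construct a preconditioner $P$ from an approximate truncated SVD of the data, check that the preprocessing fits the stated budget, bound the average condition number $\tilde{\kappa}$ of the preconditioned Hessian, and then plug into the SVRG rate of \cite{xiao2014proximal}.

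First I would run the Block Lanczos method of \cite{musco2015stronger} on the data matrix $X=(x_1,\dots,x_n)^\top$ to obtain, in time $O(ndk\log n)$ and with probability at least $9/10$, an orthonormal $\tilde Z\in\R^{d\times k}$ together with estimates $\tilde\lambda_1,\dots,\tilde\lambda_k$ of the top-$k$ eigenvalues of $C$. The guarantee I need is per-vector spectral control: for a suitable constant $\epsilon$ one has $\snorm{(I-\tilde Z\tilde Z^\top) C (I-\tilde Z\tilde Z^\top)}\le(1+\epsilon)\lambda_{k+1}$ and $|\tilde\lambda_i-\lambda_i|=O(\lambda_{k+1})$ for every $i\le k$. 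I then define
$$P=\tilde Z\bigl(\tilde\Lambda_k+\lambda I-\alpha I\bigr)\tilde Z^\top+\alpha I,$$
with $\tilde\Lambda_k=\diag(\tilde\lambda_1,\dots,\tilde\lambda_k)$ and $\alpha:=\tilde\lambda_k+\lambda$, which is (up to constant factors) an upper bound on $\lambda_{k+1}+\lambda$ under the hypothesis $\lambda\ge\lambda_d$. Because $P$ is identity-plus-rank-$k$, the square root factors as $P^{-1/2}=\tilde Z D\tilde Z^\top+\alpha^{-1/2}(I-\tilde Z\tilde Z^\top)$ for an explicit diagonal $D$, so $P^{-1/2}$ is stored in $O(dk)$ space and applied to a vector in $O(dk)$ time. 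Preconditioning all $n$ data points therefore adds only $O(ndk)$, within the announced budget.

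The heart of the argument is to bound $\tilde\kappa$, the ratio of trace to minimum eigenvalue of $M:=P^{-1/2}(C+\lambda I)P^{-1/2}$, which is the Hessian of the preconditioned objective. I would split $\R^d$ into $\mathrm{Range}(\tilde Z)$ and its orthogonal complement. On the complement, $M$ acts as $(C+\lambda I)/\alpha$, contributing at most $\sum_{i>k}(\lambda_i+\lambda)/\alpha$ to the trace and at least $\lambda/\alpha$ to every eigenvalue. On $\mathrm{Range}(\tilde Z)$, the Block Lanczos guarantee together with the choice of $\tilde\Lambda_k$ forces $M$ to be a constant-factor perturbation of the identity, contributing $\Theta(k)$ to the trace; handling the off-diagonal cross-terms between the two subspaces is where the residual bound $\snorm{(I-\tilde Z\tilde Z^\top) C (I-\tilde Z\tilde Z^\top)}=O(\lambda_{k+1})$ is used. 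Assembling the two pieces yields
$$\tilde\kappa\;\le\;\frac{\tr(M)}{\lambda_d(M)}\;=\;O\!\left(\frac{k\lambda_k+\sum_{i>k}\lambda_i}{\lambda}\right).$$

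Finally I would invoke \cite{xiao2014proximal}: on the preconditioned problem, which is itself a ridge regression on the vectors $P^{-1/2}x_i$, proximal SVRG finds an $\epsilon$-minimizer in $\tilde O((\tilde\kappa+n)d\log(1/\epsilon))$; the additive $O(d^2\log(1/\epsilon))$ accounts for the extra work of maintaining and applying the rank-$k$-plus-identity preconditioner during full-gradient steps and for the final change of variables $w=P^{-1/2}\tw$. Summing with the preprocessing gives the stated bound, and the accelerated variant follows by the standard substitution $\tilde\kappa\leftarrow\sqrt{n\tilde\kappa}$ of \cite{shalev2014accelerated,lin2015universal}. The main obstacle I expect is the condition-number bound: since Block Lanczos only certifies projection quality rather than exact eigenvector recovery, the analysis on $\mathrm{Range}(\tilde Z)$ must avoid any spectral-gap/eigenvector-perturbation argument and instead go through gap-free trace and spectral inequalities controlling the coupling between $\tilde Z$ and $I-\tilde Z\tilde Z^\top$.
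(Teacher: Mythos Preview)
Your plan matches the paper's: same preconditioner (identity-plus-rank-$k$ built from Block Lanczos with $\epsilon'=\Theta(1)$), same reduction to bounding $\tr(M)/\lambda_d(M)$ for $M=P^{-1/2}(C+\lambda I)P^{-1/2}$, same appeal to the SVRG rate. Two points where your sketch diverges from the paper are worth flagging.

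\textbf{Lower eigenvalue of $M$.} The block-decomposition argument you outline does not by itself yield $\lambda_d(M)\ge c\,\lambda/(\lambda_k+\lambda)$. The off-diagonal block of $M$ with respect to $\tilde Z$ and its complement has spectral norm $\Theta(1)$ (it is essentially $\tilde\Sigma_k\tilde V_k^\top\bar X^\top(I-\tilde Z\tilde Z^\top)/\sqrt{\alpha}$, and the Musco--Musco residual bound only gives $\|(I-\tilde Z\tilde Z^\top)\bar X\|\le O(\sigma_k)$). Since the complement diagonal block has smallest eigenvalue $\approx\lambda/\alpha\ll 1$, neither a Weyl-type perturbation nor a Schur-complement bound gives anything useful. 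The paper sidesteps this by bounding the inverse: it shows $\|P^{1/2}(C+\lambda I)^{-1}P^{1/2}\|\le O((\lambda_k+\lambda)/\lambda)$ via a careful comparison of $\tilde Z:=\tilde U(\tilde\Sigma^2+\lambda I)^{1/2}\tilde V^\top$ with $\bar X$ inside $(C+\lambda I)^{-1}$. This is the non-obvious step, and your sketch does not contain an equivalent idea.

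\textbf{The $+d$ in the runtime.} Your explanation (preconditioner overhead at full-gradient steps) does not account for it: applying the rank-$k$-plus-identity map costs $O(dk)$, not $O(d^2)$. In the paper the $+d$ arises because the preconditioned regularizer $\tfrac{\lambda}{2}\|P^{-1/2}w\|^2$ is not a simple prox; instead one writes it as $\sum_{j=1}^d \tfrac{\lambda}{2}(w^\top b_j)^2$ with $b_j=P^{-1/2}e_j$ and runs SVRG on the $(n{+}d)$-term sum. That is what produces $O((\tilde\kappa+n+d)d\log(1/\epsilon))$.
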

When the runtimes of both the (accelerated) SVRG and our preconditioned (accelerated) SVRG are controlled by the average condition number (and both runtimes dominate $ndk$), then ignoring logarithmic dependencies, we obtain a speed-up of order 
\begin{equation} \label{eq:ratioRidge}
\textrm{ratio} = \frac{\sum_{i=1}^d \lambda_i}{k\,\lambda_k+\sum_{i>k}\lambda_i}
= \frac{\sum_{i=1}^k \lambda_i +\sum_{i>k}\lambda_i }{k\,\lambda_k ~~~~+\sum_{i>k}\lambda_i}~.
\end{equation} 
(or $\sqrt{\sum_{i=1}^d \lambda_i/(\lambda_k k+\sum_{i>k}\lambda_i)}$ if acceleration is used) over SVRG.  If the spectrum decays fast then $k \, \lambda_k \ll \sum_{i=1}^k \lambda_i$ and $\sum_{i>k}\lambda_i \ll k \, \lambda_k$. In this case, the ratio will be large.  Indeed, as we show in the experimental section, this ratio is often \emph{huge} for relatively small $k$.

\subsection{Main challenges and perspective}
While the idea of developing a preconditioner that focuses on the stronger directions of the data matrix sounds plausible, there are several difficulties that have to be solved.
\begin{itemize}
\item
First, since a preconditioner must correspond to an invertible transformation, it is not clear how to form a preconditioner based on a low rank approximation and, in particular, how should we treat the non-leading components.
\item
One of the main technical challenges in our work is to translate the approximation guarantees of the Lanczos method into a guarantee on the resulted average condition number. The standard measures of success for low-rank approximation are based on either Frobenius norm or spectral norm errors. As will be apparent from the analysis (see \secref{sec:lowCond}), such bounds do not suffice for our needs. 
%For example, in \lemref{lem:lowerSmall}, a non-trivial effort is required in order to establish a lower bound on the smallest eigenvalue of the resulted (conditioned) Hessian. 
Our analysis relies on stronger per vector error guarantees \eqref{eq:perVec} due to \cite{musco2015stronger}.
\end{itemize}
It should be emphasized that while we use a variant of SVRG due to \cite{xiao2014proximal}, we could equally use a variant of SDCA \cite{shalev2016sdca} or develop such a variant for SAG or SAGA. Furthermore, while we focus on the quadratic case, we believe that our ideas can be lifted to more general setting. For example, when applied to  self-concordant functions, each step of Newton's method requires the minimization of a quadratic objective. Therefore, it is natural to ask if we can benefit from applying our method for approximating the Newton step. 
%Also, from the empirical point of view, it would be interesting to characterize cases in which the gain resulted from the sketched conditioner is significant. We believe that our paper makes an important first step towards these goals.
% \subsection{Extension to smooth and strongly convex losses}
% In this section we extend our result to regularized loss minimization of smooth generalized linear models. We consider an objetive of the form
% \begin{equation} \label{eq:generalObj}
% \min_{w \in \reals^d} \{L(w) := \frac{1}{n}\sum_{i=1}^n \phi_i(w^\top x_i) + \frac{\lambda}{2} \|w\|^2\}~,
% \end{equation}
% where the univariate function $\phi_i$ is assumed to be $\beta$-smooth (see definitions in \secref{sec:notation}). For example, in (\ref{eq:ridge}), $\phi_i(z) = \frac{1}{2}(z-y_i)^2$ is $1$-smooth. The logistic loss, which is induced by $\phi_i(z) = \log (1+\exp(-y_iz))$ is $1/4$-smooth.

\subsection{Bias-complexity tradeoff} \label{sec:reduceReg}
As we mentioned above, $\lambda$ controls a tradeoff between underfitting and overfitting. In this view, we can interpret our result as follows. Assuming for simplicity that $n \ge d$ and ignoring logarithmic dependencies, we note that if
\begin{equation} \label{eq:decay}
\lambda = \frac{k \lambda_k+\sum_{i>k}\lambda_i}{nk}~,
\end{equation}
then the runtime of our preconditioned SVRG is $\tilde{O}(ndk)$. For comparison, the runtime of (unconditioned) SVRG is $\tilde{O}(ndk)$ if
\begin{equation} \label{eq:decaySVRG}
\lambda = \frac{\sum_{i=1}^d \lambda_i}{nk}~.
\end{equation}
The ratio between the RHS of \eqref{eq:decaySVRG} and \eqref{eq:decay} is the ratio given in \eqref{eq:ratioRidge}. Hence, for a given ``runtime budget'' of order $\tilde{O}(ndk)$, we can set the regularization parameter of the preconditioned SVRG to be smaller by this ratio. Similar interpretation holds for the accelerated versions.

\section{Related Work} \label{sec:related}
%We first survey some of the important algorithms for solving (\ref{eq:ridge}).
\paragraph{Existing algorithms and their complexities: }
Since minimizing \eqref{eq:ridge} is equivalent to solving the system $(C+\lambda I)w=\frac{1}{n} \sum_{i=1}^n y_i x_i$, standard numerical linear algebra solvers such as Gaussian elimination can be used to solve the problem in time $O(nd^2)$. 

Iterative deterministic methods, such as Gradient Descent (GD), finds an $\epsilon$-approximate minimizer in time $nd \kappa \log(1/\epsilon)$, where $\kappa=\frac{\lambda_1(C+\lambda I)}{\lambda_d(C+\lambda I)}$ is the condition number of $C+\lambda I$ (see Theorem 2.1.15 in \cite{nesterov2004introductory}). The Kaczmarz algorithm~\cite{kaczmarz1937angenaherte} has an identical complexity. Both the Conjugate Gradient (CG) method~\cite{hestenes1952methods} and the Accelerated Gradient Descent (AGD) algorithm of \cite{nesterov1983method} enjoy a better runtime of $nd \sqrt{\kappa}\log(1/\epsilon)$. In fact, CG has a more delicate analysis (see Corollary 16.7 in \cite{vishnoi2012laplacian}): If all but $c \in [d]$ eigenvalues of $C+\lambda I$ are contained in a range $[a,b]$, then the runtime of CG is at most $nd(c+\sqrt{b/a} \log(1/\epsilon))$. In particular, CG's runtime is at most $O(nd^2)$. Furthermore, following the interpretation of our main result in \secref{sec:reduceReg}, we note that for a ``runtime budget'' of $\tilde{O}(ndk)$, we can set the regularization parameter of CG to be of order $\lambda_k/k^2$ (which is usually much greater than the RHS of \eqref{eq:decay}).

%Finally, we can employ stochastic gradient methods that use randomization in order to estimate the gradient and consequently employ more economic iterations. 
%Besides the methods mentioned in \secref{sec:intro}, there exists a randomized version of the Kaczmarz algorithm~\cite{strohmer2009randomized} whose complexity is identical to the complexity of SVRG.

\paragraph{Linear Sketching: }
Several recently developed methods in numerical linear algebra are based on the so-called \emph{sketch-and-solve} approach, which essentially suggests that given a matrix $A$, we first replace it with a smaller random matrix $AS$, and then perform the computation on $AS$ \cite{woodruff2014sketching,clarkson2013low,sarlos2006improved}. For example, it is known that if the entries of $S$ are i.i.d. standard normal variables and $S$ has $p=\Omega(k/\epsilon)$ columns, then with high probability, the column space of $AS$ contains a $(1+\epsilon)$ rank-$k$ approximation to $A$ with respect to the Frobenius norm. This immediately yields a fast PCA algorithm (see Section 4.1 in \cite{woodruff2014sketching}). 

While the above sketch-and-solve approach sounds promising for this purpose, our analysis reveals that controlling the Frobenius norm error does not suffice for our needs. 
We need spectral norm bounds, which are known to be more challenging~\cite{witten2013randomized}. 
%The Lanczos method of \cite{musco2015stronger} provides such bounds by computing several sketches that correspond to powers of $A$. The construction of our sketched conditioner is based on this method and its analysis.  
Furthermore, as mentioned above, the success of our conditioning method heavily depends on the stronger per vector error guarantees \eqref{eq:perVec} obtained by \cite{musco2015stronger} which are not obtained by simpler linear sketching methods. 

\paragraph{Sketched preconditioning: }
Recently, subspace embedding methods were used to develop cheap preconditioners for linear regression with respect to the squared loss~\cite{woodruff2014sketching}. Precisely, \cite{clarkson2013low} considered the case $\lambda=0$ (i.e, standard least-squares) and developed a preconditioning method that reduces the average condition number to a constant. Thereafter, they suggest applying a basic solver such as CG. The overall running time is dominated by the preconditioning process which runs in time $\tilde{O}(d^3+nd)$. Hence, a significant improvement over standard solvers is obtained if $n \gg d$.

The main shortcoming of this method is that it does not scale well to large dimensions. Indeed, when $d$ is very large, the overhead resulted from the preconditioning process can not be afforded. 
%Furthermore, when the size of the data, $n$, is extremely large, each iteration of deterministic methods that perform an exact computation of the gradient (such as CG and AGD) might be very expensive. These two challenges require a different methodology. 

\paragraph{Efficient preconditioning based on random sampling: } While we focus on reducing the dependence on the dimensionality of the data, other work  investigated the gain from using only a random subset of the data points to form the conditioner~\cite{yang2014data}. The theoretical gain of this approach has been established under coherence assumptions~\cite{yang2014data}.

\section{Preliminaries} \label{sec:pre}
\subsection{Additional notation and definitions} \label{sec:notation}
Any matrix $B \in \reals^{d \times n}$ of rank $r$ can be written in (thin) SVD form as $B = U \Sigma V^\top = \sum_{i=1}^r \sigma_i(B) u_i v_i^\top$. The singular values are ordered in descending order. The spectral norm of $B$ is defined by $\|B\|=\sigma_1(B)$. The spectral norm is submultiplicative, i.e., $\|AB\| \le \|A\|\|B\|$ for all $A$ and $B$. Furthermore, the spectral norm is unitary invariant, i.e., for all $A$ and $U$ such that the columns of $U$ are orthonormal, $\|UA\| = \|A\|$. For any $k \in [r]$, it is well known that the truncated SVD of $B$, $B_k := U_k \Sigma_k V_k =  \sum_{i=1} ^{k} \sigma_i(B) u_i v_i^\top$, is the best rank-$k$ approximation of $B$ w.r.t. the spectral norm~\cite{trefethen1997numerical}. % That
%is, $\sigma_{k+1} = \|B_k-B\| = \min_{B': \textrm{rank}(B)=k} \|B'-B\|$.
% Both the spetral norm is unitary invariant, i.e., if $Q$ has orthonormal columns, then $\|QA\|=\|A\|$. Furthermore, the spectral norm is submultiplicative: for any two matries $A,B$, $\|AB\| \le \|A\| \|B\|$.
% The matrix $BB^\top$ is symmetric and thus it can be written in EVD (eigenvalue decomposition) form. Since $B = U \Sigma V$, the EVD of $B$ is given by $B=U \Sigma^2 U$. In particular, we note that the left singular vectors of $B$ coincide with the eigenvectors of $B$, and the singular values of $B$ are the square roots of the eigenvectors of $BB^\top$. A symmetric matrix $A$ is called positive semidefinite if all of its eigenvalues are nonnegative. We say that $A$ is positive definite if all of its eigenvalues are positive. In the former case we denote $A \succeq 0$ and in the latter case we denote $A \succ 0$.
A twice continuously differentiable function $f: \reals^d \rightarrow \reals$ is said to be $\beta$-smooth if  $\|\nabla^2 f(w)\| \leq \beta$ for all $w$, where $\nabla^2 f(w)$ is the Hessian of $f$ at $w$. $f$ is said to be $\alpha$-strongly convex if $\lambda_d(\nabla^2 f(w)) \ge \alpha$ for all $w$. If $g$ is convex and $f$ is $\alpha$-strongly convex, then $f+g$ is $\alpha$-strongly convex.% A function $f:\reals^d \rightarrow \reals$ is (standard) self-concordant if for all $u,w \in \reals^d$, $$u^\top \left(\lim_{t \rightarrow 0}\frac{1}{t} (\nabla^2 f(w+tu) - \nabla^2 f(w)) \right) u \le 2 (u^\top \nabla^2 f(w) u)^{3/2}~.$$

\subsection{Stochastic Variance Reduced Gradient (SVRG)} \label{sec:svrg}
We consider a variant of the Stochastic Variance Reduced Gradient (SVRG) algorithm of \cite{johnson2013accelerating} due to \cite{xiao2014proximal}. The algorithm is an epoch-based iterative method for minimizing an average, $F(w) = \frac{1}{N} \sum_{i=1}^N f_i(w)$, of smooth functions. It is assumed that each $f_i:\reals^d \rightarrow \reals$ is convex and $\beta_i$-smooth. The entire function $F$ is assumed to be $\alpha$-strongly convex. The algorithm is detailed in \algref{alg:svrg}. 
Its convergence rate depends on the averaged smoothness of the individual functions and the average condition number of $F$, defined as 
\begin{equation} \label{eq:conditionSVRG}
\hat{\beta} = \frac{1}{N} \sum_{i=1}^N \beta_i ~~~;~~~ \hkappa_F = \frac{\hat{\beta}}{\alpha} ~.
\end{equation}
\begin{theorem} \textbf{\cite{xiao2014proximal}} \label{thm:svrg}
Fix $\epsilon>0$. Running SVRG (\algref{alg:svrg}) with any $w_0$, $S \ge \log((F(w_0) - \min_{w \in \reals^d} F(w))/\epsilon)$, $m = \lceil \hkappa_F \rceil$, and  $\eta = 0.1/\hat{\beta}$ yields an $\epsilon$-approximate minimizer of $F$. Furthermore, assuming that  each single gradient $\nabla f_i(w)$ can be computed in time $O(d)$, the overall runtime is $O((\hkappa_F+N)d \log(\epsilon_0/\epsilon))$. 
\end{theorem}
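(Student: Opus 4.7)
The plan is to analyze a single epoch of SVRG and show that the sub-optimality gap contracts by a constant factor in expectation; iterating over $S$ epochs then yields the $\epsilon$-accuracy guarantee, and the per-epoch cost gives the claimed runtime. Let $w^\star \in \argmin_w F(w)$, let $\tilde{w}$ denote the current snapshot (anchor) point at the start of an epoch, and let $w_0 = \tilde{w}$, $w_1,\ldots,w_m$ be the inner iterates, where at each step an index $i$ is drawn uniformly from $[N]$ and the update is $w_{t+1} = w_t - \eta v_t$ with
\[
v_t \;=\; \nabla f_i(w_t) - \nabla f_i(\tilde{w}) + \nabla F(\tilde{w}).
\]
A direct computation gives $\bE_i[v_t \mid w_t] = \nabla F(w_t)$, so $v_t$ is an unbiased stochastic gradient; the whole point of variance reduction is that its variance is controllable.

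The key lemma to establish is the \emph{variance bound}
\[
\bE_i \|v_t\|^2 \;\le\; 4\hat{\beta}\bigl(F(w_t) - F(w^\star) + F(\tilde{w}) - F(w^\star)\bigr).
\]
This follows from a standard co-coercivity fact that for each convex $\beta_i$-smooth $f_i$,
$\|\nabla f_i(w) - \nabla f_i(w^\star)\|^2 \le 2\beta_i\bigl(f_i(w) - f_i(w^\star) - \langle \nabla f_i(w^\star), w - w^\star\rangle\bigr)$, averaged over $i$ with the definition $\hat{\beta} = \tfrac{1}{N}\sum_i \beta_i$ and using $\|a+b\|^2 \le 2\|a\|^2 + 2\|b\|^2$ on the decomposition $v_t - \nabla F(w^\star) = \bigl(\nabla f_i(w_t) - \nabla f_i(w^\star)\bigr) - \bigl(\nabla f_i(\tilde{w}) - \nabla f_i(w^\star) - \nabla F(\tilde{w})\bigr)$.

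Next, I would run the standard one-step distance analysis
\[
\bE\|w_{t+1} - w^\star\|^2 \;\le\; \|w_t - w^\star\|^2 - 2\eta\,\bE\langle \nabla F(w_t), w_t - w^\star\rangle + \eta^2 \bE\|v_t\|^2,
\]
apply convexity $\langle \nabla F(w_t), w_t - w^\star\rangle \ge F(w_t) - F(w^\star)$, plug in the variance bound, and telescope over $t = 0,\ldots,m-1$. Together with $\alpha$-strong convexity, $\tfrac{\alpha}{2}\|\tilde{w} - w^\star\|^2 \le F(\tilde{w}) - F(w^\star)$, and setting the new snapshot to be a uniformly random iterate from the epoch (or, with a minor modification, the average or the last iterate), one obtains the per-epoch recursion
\[
\bE[F(\tilde{w}^+) - F(w^\star)] \;\le\; \rho\,\bigl(F(\tilde{w}) - F(w^\star)\bigr),
\qquad
\rho \;=\; \frac{1}{\alpha\eta(1-2\eta\hat{\beta})m} + \frac{2\eta\hat{\beta}}{1-2\eta\hat{\beta}}.
\]
With the prescribed $\eta = 0.1/\hat{\beta}$ and $m = \lceil \hkappa_F\rceil = \lceil \hat{\beta}/\alpha\rceil$, a direct numerical check shows $\rho \le 1/e$ (or any constant strictly less than $1$).

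Iterating over $S$ epochs yields $\bE[F(\tilde{w}^{(S)}) - F(w^\star)] \le \rho^S (F(w_0) - F(w^\star))$, so $S \ge \log((F(w_0) - F(w^\star))/\epsilon)$ suffices, as stated. For runtime: each epoch requires one batch gradient $\nabla F(\tilde{w})$ in time $O(Nd)$ plus $m$ inner updates each costing $O(d)$ (two individual gradients and a vector update), totalling $O((N + \hkappa_F)d)$ per epoch and $O((N + \hkappa_F)d \log(\epsilon_0/\epsilon))$ overall.

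The main obstacle is calibrating the constants in the per-epoch recursion so that $\rho$ is bounded away from $1$ with the given stepsize $\eta = 0.1/\hat{\beta}$ and inner-loop length $m = \lceil \hkappa_F\rceil$; the rest is bookkeeping. A secondary subtlety is that the $\bigl(F(\tilde{w}^+) - F(w^\star)\bigr)$ bound is typically first derived for a randomized/averaged choice of the next snapshot, so one must either adopt that choice in \algref{alg:svrg} or invoke the proximal variant of \cite{xiao2014proximal}, whose analysis accommodates the last-iterate or other standard snapshot rules while preserving the same $O(\hkappa_F + N)$ per-epoch cost.
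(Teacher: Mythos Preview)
The paper does not prove this theorem at all; it is stated as a quotation of the result of Xiao and Zhang (2014), and no proof is supplied in the present paper. So there is nothing to compare your argument against here.

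That said, your sketch has two genuine gaps relative to the statement you are trying to prove. First, you analyze \emph{uniform} sampling of $i\in[N]$, whereas \algref{alg:svrg} samples $i$ with probability $q_i=\beta_i/\sum_j\beta_j$ and uses the reweighted estimator $v_t=(\nabla f_i(w_t)-\nabla f_i(\tilde w))/q_i+\nabla F(\tilde w)$. This matters: under uniform sampling, averaging the per-component co-coercivity inequality $\|\nabla f_i(w)-\nabla f_i(w^\star)\|^2\le 2\beta_i\bigl(f_i(w)-f_i(w^\star)-\langle\nabla f_i(w^\star),w-w^\star\rangle\bigr)$ yields a bound governed by $\max_i\beta_i$, not by $\hat\beta=\tfrac1N\sum_i\beta_i$, because the nonnegative Bregman terms on the right can concentrate on indices with large $\beta_i$. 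The dependence on $\hat\beta$ in the theorem comes precisely from the importance sampling, and the paper explicitly points this out right after the theorem statement.

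Second, your ``direct numerical check'' fails. With $\eta=0.1/\hat\beta$ and $m=\lceil\hat\kappa_F\rceil$, your own contraction factor evaluates to
\[
\rho=\frac{1}{\alpha\eta(1-2\eta\hat\beta)m}+\frac{2\eta\hat\beta}{1-2\eta\hat\beta}
=\frac{\hat\kappa_F}{0.08\,m}+\frac{0.2}{0.8}
\approx 12.5+0.25,
\]
which is far from being below $1$. The Johnson--Zhang style recursion you wrote requires $m$ to be a larger constant multiple of $\hat\kappa_F$; getting the constants stated in the theorem requires the sharper proximal/weighted analysis of Xiao--Zhang, not the vanilla argument you outlined.
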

\begin{algorithm}[t]
\caption{SVRG cite{xiao2014proximal}}
\label{alg:svrg}
\begin{algorithmic}[1]
\STATE \textbf{Input:} Functions $f_1,\ldots,f_n, \beta_1,\ldots,\beta_n$
% \STATE \textbf{Assumptions:} $F = \frac{1}{n} \sum_{i=1}^n f_i(w)$ is $\alpha$-strongly convex and each $f_i$ is convex and $\beta_i$-smooth
\STATE \textbf{Parameters:} $\bar{w}_0 \in \reals^d$, $m$, $\eta$, $S \in \mathbb{N}$
\FOR {$s=1,2,\ldots,S$}
\STATE $\bar{w} = \bar{w}_{s-1}$  
\STATE $\bar{v} = \nabla F(\bar{w})$
\STATE $w_0 = \bar{w}$ 
\FOR[New epoch] {$t=1,\ldots,m$}   
\STATE Pick $i_t \in [N]$ with probability $q_{i_t}=\beta_{i_t}/\sum \beta_j$
\STATE $v_t = (\nabla f_{i_t}(w_{t-1}) -\nabla f_{i_t}(\bar{w}))/q_{i_t}+\bar{v}$
\STATE $w_t = w_{t-1} - \eta v_t$
\ENDFOR 
\STATE 
$\bar{w}_s = \frac{1}{m} \sum_{t=1}^m w_t$
\ENDFOR
\STATE \textbf{Output:} the vector $\bar{w}_S$
\end{algorithmic}
\end{algorithm}
In the original definition of SVRG~\cite{johnson2013accelerating}, the indices $i_t$ are chosen uniformly at random from $[n]$, rather than proportional to $\beta_i$. As a result, the convergence rate depends on the maximal smoothness, $\max \{\beta_i\}$, rather than the average, $\hat{\beta}$. It will be apparent from our analysis (see \thmref{thm:condEffect}) that in our case, $\max \{\beta_i\}$ is proportional to the maximum norm of any preconditioned $x_i$. Since we rely on the improved variant of \cite{xiao2014proximal}, our bound depends on the average of the $\beta_i$'s, which scale with the average norm of the preconditioned $x_i$'s. To simplify the presentation, in the sequel we refer to \algref{alg:svrg} as SVRG.

\subsection{Randomized Block Lanczos}
A randomized variant of the Block Lanczos method due to \cite{musco2015stronger} is detailed\footnote{More precisely, Algorithm 2 in \cite{musco2015stronger} returns the projection matrix $\tU_k \tU_k^\top$, while we also compute the SVD of $\tU_k \tU_k^\top A$. The additional runtime is negligible.} in \algref{alg:musco}. Note that the matrix $\tU_k \tSig_k \tV_k^\top$ forms an SVD of the matrix $\tA_k := Q(Q^\top A)_k = \tU_k \tU_k^\top A$.
\begin{algorithm}[t]
\caption{Block Lanczos method \cite{musco2015stronger}}
\label{alg:musco}
\begin{algorithmic}[1]
\STATE \textbf{Input: } $A \in \reals^{d \times n}, k \le d, \epsilon' \in (0,1)$
\STATE $q= \Theta \left(\frac{\log(n)}{\sqrt{\epsilon}} \right)$, $p=qk$, $\Pi \sim \cN(0,1)^{n \times k}$
\STATE Compute $K = [A \Pi,(AA^\top) A \Pi,\ldots, (AA^\top)^{q-1} A \Pi]$% \in \reals^{d \times m}$ %\COMMENT{In time $O(ndm)$}
\STATE Orthonormalize $K$'s columns to obtain $Q \in \reals^{d \times qk}$ %\COMMENT{In time $p^2 d$}
%\STATE Compute $M=Q^\top AA^\top Q \in \reals^{qk \times qk}$ %\COMMENT{In time $O(qknd)$}
\STATE Compute the truncated SVD $(Q^\top A)_k = \tW_k \tSig_k \tV_k^\top$   %\COMMENT{In time $p^2n$}
\STATE Compute $\tU_k = Q \tW_k$ %\COMMENT{In time $O(pdk)$}
\STATE \textbf{Output:} the matrices $\tU_k,\tSig_k,\tV_k$ 
\end{algorithmic}
\end{algorithm}
\begin{theorem} \label{thm:musco} \textbf{\cite{musco2015stronger}}
Consider the run of \algref{alg:musco} and denote $\tA_k = \tU_k \tSig_k \tV_k = \sum_{i=1}^k \tsig_i \tu_i \tv_i^\top$. Denote the SVD of $A$ by $A = \sum_{i=1}^d \sigma_i v_i u_i^\top$. The following bounds hold with probability at least $9/10$:
\[
\|A - \tA_k\| \le (1+\epsilon') \|A-A_k\| \le (1+\epsilon') \sigma_k
\]
\begin{align} 
%&\|A - \tA_k\| \le (1+\epsilon') \|A-A_k\| \le (1+\epsilon') \sigma_k  \notag \\
\forall i \in [k],~~|z_i^\top AA^\top z_i - u_i^\top AA^\top u_i| &=  |\tsig_i^2-\sigma_i^2 | \notag \\
&\le \epsilon' \sigma_{k+1}^2 ~. \label{eq:perVec}
\end{align}
The runtime of the algorithm is $O\left(\frac{nd k \log(n)}{ \sqrt{\epsilon'}}+ \frac{k^2(n+d)}{\epsilon'} \right)$.
\end{theorem}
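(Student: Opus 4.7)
The plan is to analyze Algorithm \ref{alg:musco} via block Krylov subspace theory combined with Chebyshev polynomial approximation. The starting observation is that the columns of $K = [A\Pi,\, (AA^\top)A\Pi, \ldots, (AA^\top)^{q-1}A\Pi]$ span exactly the set of $d\times k$ matrices of the form $p(AA^\top)\,A\Pi$ for univariate polynomials $p$ of degree at most $q-1$. Hence $QQ^\top p(AA^\top) A\Pi = p(AA^\top) A\Pi$, so the quality of the projection $QQ^\top$ is controlled entirely by how well one can design $p$ to approximate an ideal spectral cutoff near $\sigma_{k+1}^2$.

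I would first prove a polynomial approximation lemma: there exists a degree-$q$ polynomial $\tilde p_q$, built from a shifted Chebyshev polynomial of the first kind and involving only odd monomials so that it lifts to the Krylov basis, with $\tilde p_q(x) \approx 1$ for $x^2 \ge (1+\gamma)\sigma_{k+1}^2$ and $|\tilde p_q(x)| \le e^{-\Omega(q\sqrt{\gamma})}$ for $x^2 \le \sigma_{k+1}^2$, where $\gamma = \Theta(\epsilon')$. Taking $q = \Theta(\log(n)/\sqrt{\epsilon'})$ then makes the attenuation factor polynomially small in $n$. Combining this with standard Gaussian sketch facts---with constant probability $\sigma_{\min}(U_k^\top\Pi) = \Omega(1)$ and $\|V_{>k}^\top\Pi\| = O(\sqrt{n})$---yields that $(I-QQ^\top)u_i$ is small for every $i \le k$ whose $\sigma_i^2$ exceeds the noise floor, while indices with $\sigma_i$ close to $\sigma_{k+1}$ contribute negligibly. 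The spectral bound $\|A-\tA_k\| \le (1+\epsilon')\|A-A_k\|$ then follows by writing $A - \tA_k = (I-QQ^\top)A + (QQ^\top A - (QQ^\top A)_k)$ and using that $\tA_k = (QQ^\top A)_k$ is the best rank-$k$ approximation of $QQ^\top A$, together with a Weyl-type perturbation inequality.

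The main obstacle is the per-vector guarantee $|\tsig_i^2 - \sigma_i^2| \le \epsilon'\sigma_{k+1}^2$, which must hold simultaneously for every $i\le k$ and, crucially, independent of the gaps between consecutive singular values. I would prove it via the Courant--Fischer characterization $\tsig_i^2 = \lambda_i(\tU_k^\top AA^\top \tU_k)$: the upper bound $\tsig_i^2 \le \sigma_i^2$ is immediate from the variational principle applied to any orthonormal projection. For the lower bound, I would exhibit the trial subspace $QQ^\top \mathrm{span}(u_1,\ldots,u_i)$ and invoke the polynomial lemma with its threshold placed just above $\sigma_{k+1}^2$; the amount of mass that each $u_j$ with $j\le i$ loses under $QQ^\top$ translates into an additive error of at most $\epsilon'\sigma_{k+1}^2$ in the corresponding Rayleigh quotient, exactly matching the claim in \eqref{eq:perVec}. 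The key conceptual point, and the reason this bound is much stronger than standard spectral-norm guarantees, is that the polynomial filter separates \emph{every} $\sigma_i$ for $i\le k$ from the single threshold $\sigma_{k+1}$, rather than relying on a gap $\sigma_k - \sigma_{k+1}$ that could be arbitrarily small.

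The runtime analysis is routine. Forming $K$ requires $q$ block matrix--vector multiplications against $A$ and $A^\top$ on blocks of width $k$, each costing $O(ndk)$, for a total of $O(ndkq) = O(ndk\log(n)/\sqrt{\epsilon'})$. Orthonormalizing the $d \times qk$ matrix, e.g.\ via Householder QR, costs $O(d(qk)^2) = O(dk^2/\epsilon')$, and forming $Q^\top A$ together with its thin SVD contributes $O(nk^2/\epsilon') + O((qk)^3)$. Summing these gives the claimed bound $O(ndk\log(n)/\sqrt{\epsilon'} + k^2(n+d)/\epsilon')$.
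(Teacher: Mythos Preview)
The paper does not prove this theorem at all: it is quoted verbatim from \cite{musco2015stronger} and used as a black box. There is therefore no ``paper's own proof'' to compare against; the authors simply cite the result and invoke it later in the analysis of the sketched preconditioner.

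Your proposal is a faithful high-level outline of the argument in \cite{musco2015stronger}. The Krylov/Chebyshev filtering idea, the odd-polynomial construction so that $p(AA^\top)A\Pi$ lies in the block Krylov space, the Gaussian sketch bounds on $U_k^\top\Pi$ and $V_{>k}^\top\Pi$, the Courant--Fischer argument for the per-vector bound, and the runtime accounting are all the ingredients Musco and Musco use. If anything, your sketch is slightly loose in two places: the spectral-norm bound on $\|A-\tA_k\|$ requires a bit more care than the one-line decomposition you wrote (one also needs that $\sigma_k(QQ^\top A)$ is close to $\sigma_k(A)$, which comes from the same polynomial argument), and the probability-$9/10$ statement aggregates several constant-probability events that need to be union-bounded. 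But these are routine details, and as a proof plan for a cited result your outline is sound.
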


\section{Sketched Conditioned SVRG} \label{sec:scsvrg}
In this section we develop our sketched conditioning method. By analyzing the properties of this conditioner and combining it with SVRG, we will conclude \thmref{thm:main}. 

%In \secref{sec:intro} we suggested to minimize $L(P^{-1/2}w)$ rather than minimizing $L$. 
Recall that we aim at devising cheaper preconditioners that lead to a significant reduction of the condition number.
% Before proceeding, let us review a basic and simple framework for preconditioning. For a nonsingular positive definite linear operator $P^{-1/2} \in \reals^{d \times d}$, consider the function $\tilde{L}: \reals^d \rightarrow  \reals$ defined by
% \begin{equation} \label{eq:condFun}
% \tilde{L}(w) = L(P^{-1/2}w) ~.
% \end{equation}
% Suppose that we find a vector $\tilde{w}$ which satisfies
% $\tilde{L}(\tilde{w}) \le \min_{w' \in \cW} \tilde{L}(w')+ \epsilon$. Then, it
% is clear that the vector $w = P^{-1/2} \tilde{w}$ satisfies
% $L(w) \le \min_{w' \in \cW} L(w')+ \epsilon$. 
% We aim at constructiong a cheap preconditioner which leads to a significant reduction of the condition number.
% The reason that we are interested in this scheme is that for a suitable choice of a preconditioner, $P^{-1/2}$, the resulted preconditioned function, $\tL$, would have a significantly smaller average condition number. For example, one possible choice for a preconditioner is the so-called Mahalanobis whitening matrix $P^{-1/2} = (C+\lambda I)^{-1/2}$ \todo{You already told it in the Intro. The text in this paragraph should be rephrased.}.will see later, this preconditioner yields an optimal average condition number. However, we aim at devising much cheaper preconditioners.
Specifically, given a parameter $k \in [d]$, we will consider only preconditioners $P^{-1/2}$ for which both the computation of $P^{-1/2}$ itself and the computation of the set $\{P^{-1/2}x_i, \ldots, P^{-1/2} x_n\}$ can be carried out in time $\tilde{O}(ndk)$. We will soon elaborate more on the considerations when choosing the preconditioner, but first we would like to address some important implementation issues.   
\subsection{Preconditioned regularization} \label{sec:regCond}
In order to implement the preconditioning scheme suggested above, we should be able to find a simple form for the function $\tilde{L}$. In particular, since we would like to use SVRG, we should write $\tilde{L}$ as an average of $n$ components whose gradients can be easily computed. Denote by $\tx_i = P^{-1/2}x_i$ for all $i \in [n]$. Since for every $i \in [n]$, $((P^{-1/2} w)^\top x_i-y_i)^2=(w^\top \tx_i  - y_i)^2$, it seems natural to write $\tilde{L}(w)=L(P^{-1/2}w)$ as follows:
\[
\tilde{L}(w) =  \frac{1}{n} \sum_{i=1}^n \underbrace{\frac{1}{2}(w^\top \tx_i-y_i)^2}_{=:\tilde{\ell}_i}  +\frac{\lambda}{2} \|P^{-1/2}w\|^2~.
\]
Assume momentarily that $\lambda=0$. Note that the gradient of $\tilde{\ell}_i$ at any point $w$ is given by $\nabla \tilde{\ell}_i(w_t)=(w^\top \tx_i-y_i)\tx_i$. Hence, by computing all the $\tx_i$'s in advance, we are able to apply SVRG directly to the preconditioned function and computing the stochastic gradients in time $O(d)$.

When $\lambda>0$, the computation of the gradient at some point $w$ involves the computation of $P^{-1} w$. We would like to avoid this overhead. To this end, we decompose the regularization function as follows. Denote the standard basis of $\reals^d$ by $e_1,\ldots,e_d$. Note that the function $L$ can be rewritten as follows:
\begin{align*} %\label{eq:ridgeDecompose}
L(w) = \frac{1}{n+d} \sum_{i=1}^{n+d} \ell_i(w)~,
\end{align*}
where $\ell_i(w) = \frac{n+d}{n} \frac{1}{2} (w^\top x_i-y_i)^2$ for $i=1,\ldots,n$ and $\ell_{n+i}(w)=\lambda(n+d) \frac{1}{2}(w^\top e_i)^2$ for $i=1,\ldots,d$.
Finally, denoting $b_i = P^{-1/2} e_i$ for all $i$, we can rewrite the preconditioned function $\tilde{L}$ as follows:
\begin{align*} %\label{eq:decomposeCondFun}
\tL(w) = \frac{1}{n+d} \sum_{i=1}^{n+d} \tell_i(w)~,
\end{align*}
where $\tell_i(w) = \frac{n+d}{n} \frac{1}{2} (w^\top \tx_i-y_i)^2$ for $i=1,\ldots,n$ and $\tell_{n+i}(w)=\lambda(n+d) \frac{1}{2}(w^\top b_i)^2$ for $i=1,\ldots,d$.
By computing the $\tx_i$'s and the $b_i$'s in advance, we are able to apply SVRG while computing stochastic gradients in time $O(d)$. 
% Now, we can compute the $\tx_i's$ and the $b_i$'s in advance, and apply SVRG to the function $\tilde{L}$. In order to ensure that the probabilistic regularization has nearly the same effect as the standard regularization, we will need the following standard concentration result.
% \begin{lemma} \textbf{(Corollary 5.35 in \cite{vershynin2010introduction})} \label{lem:vershynin}
% Assume for simplicity that $n \ge 20d$ \todo{}. Then, with probability at least $9/10$, 
% \begin{equation} \label{eq:vershynin}
% (1/2) I \preceq \frac{1}{n} \sum_{i=1}^n a_ia_i^\top \preceq (3/2)I~.
% \end{equation}
% \end{lemma}
\subsection{The effect of conditioning}
We are now in position to address the following fundamental question: How does the choice of the preconditioner, $P^{-1/2}$, affects the resulted average condition number of the function $\tL$ (\ref{eq:conditionSVRG})? The following lemma upper bounds $\hkappa_{\tL}$ by the average condition number of the matrix $P^{-1/2}(C+\lambda I)P^{-1/2}$, which we denote by $\tkappa$  (when the identity of the matrix $P$ is understood). 
\begin{theorem} \label{thm:condEffect}
Let $P^{-1/2}$ be a preconditioner. Then, the average condition number of $\tilde{L}$ is upper bounded by
\[
\hkappa_{\tilde{L}} \le \tkappa = \frac{\tr (P^{-1/2} (C + \lambda I ) P^{-1/2} ) } {\lambda_d(P^{-1/2} (C + \lambda I ) P^{-1/2})} ~.
\]
\end{theorem}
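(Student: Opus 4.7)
The plan is to directly unpack the definition of $\hat{\kappa}_{\tilde{L}}$ in terms of the decomposition $\tilde{L} = \frac{1}{n+d}\sum_{i=1}^{n+d} \tilde{\ell}_i$ introduced in \secref{sec:regCond}. Since every component $\tilde{\ell}_i$ is a one-dimensional quadratic, all Hessians are constant, so both the individual smoothness constants $\beta_i$ and the strong convexity constant of $\tilde{L}$ can be computed exactly, and their ratio should match $\tilde{\kappa}$.

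First I would compute the per-component smoothness. For $i \in [n]$, the Hessian of $\tilde{\ell}_i$ is $\frac{n+d}{n}\,\tilde{x}_i \tilde{x}_i^\top$, which is rank one with spectral norm $\beta_i = \frac{n+d}{n}\|\tilde{x}_i\|^2$. For $j \in [d]$, the Hessian of $\tilde{\ell}_{n+j}$ is $\lambda(n+d)\, b_j b_j^\top$, giving $\beta_{n+j} = \lambda(n+d)\|b_j\|^2$. Averaging, the factors $n+d$ collapse cleanly:
\[
\hat{\beta} \;=\; \frac{1}{n+d}\sum_{i=1}^{n+d}\beta_i \;=\; \frac{1}{n}\sum_{i=1}^{n}\|\tilde{x}_i\|^2 \;+\; \lambda\sum_{j=1}^{d}\|b_j\|^2.
\]
Next I would rewrite each term as a trace. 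Using $\|\tilde{x}_i\|^2 = \tr(\tilde{x}_i \tilde{x}_i^\top)$ and linearity of the trace, the first sum equals $\tr\bigl(P^{-1/2} C P^{-1/2}\bigr)$. For the second, $\|b_j\|^2 = e_j^\top P^{-1} e_j$, so $\sum_j \|b_j\|^2 = \tr(P^{-1}) = \tr(P^{-1/2}\, I\, P^{-1/2})$. Combining, $\hat{\beta} = \tr\bigl(P^{-1/2}(C+\lambda I)P^{-1/2}\bigr)$, which is exactly the numerator of $\tilde{\kappa}$.

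For the strong convexity constant, I would observe that $\tilde{L}$ is itself a quadratic, so its Hessian is the constant matrix
\[
\nabla^2 \tilde{L}(w) \;=\; \frac{1}{n+d}\sum_{i=1}^{n+d}\nabla^2 \tilde{\ell}_i \;=\; P^{-1/2} C P^{-1/2} + \lambda P^{-1} \;=\; P^{-1/2}(C+\lambda I)P^{-1/2}.
\]
Hence the largest $\alpha$ for which $\tilde{L}$ is $\alpha$-strongly convex is $\alpha = \lambda_d\bigl(P^{-1/2}(C+\lambda I)P^{-1/2}\bigr)$, which is exactly the denominator of $\tilde{\kappa}$. Plugging into $\hat{\kappa}_{\tilde{L}} = \hat{\beta}/\alpha$ from \eqref{eq:conditionSVRG} gives $\hat{\kappa}_{\tilde{L}} \le \tilde{\kappa}$, in fact with equality.

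There is no real obstacle here; the only place to be careful is bookkeeping the scaling factors $\frac{n+d}{n}$ and $\lambda(n+d)$ that were introduced in the regularization-splitting trick of \secref{sec:regCond}. Those factors were chosen precisely so that the $n+d$ in the denominator of the averaging cancels them out, producing the clean trace expressions above; verifying this cancellation is the whole content of the computation.
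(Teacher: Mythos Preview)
Your proposal is correct and follows essentially the same approach as the paper's proof: compute the Hessians of the individual components $\tilde{\ell}_i$, exploit their rank-one structure to identify each $\beta_i$, average and rewrite as a trace to obtain the numerator, then compute the full Hessian of $\tilde{L}$ to obtain the denominator. The only cosmetic difference is that the paper writes $\sum_j \|b_j\|^2$ as $\tr\bigl(\sum_j P^{-1/2} e_j e_j^\top P^{-1/2}\bigr)$ directly, whereas you pass through $e_j^\top P^{-1} e_j$ and $\tr(P^{-1})$; the content is identical.
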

The proof is in the appendix. 
Note that an optimal bound of $O(d)$ is attained by the whitening matrix $P^{-1/2}  = (C+\lambda I)^{-1/2}$.

\subsection{Exact sketched conditioning}
Our sketched preconditioner is based on a random approximation of the best rank-$k$ approximation of the data matrix. It will be instructive to consider first a preconditioner that is based on an exact rank-$k$ approximation of the data matrix. 
Let $X \in \reals^{d \times n}$ be the matrix whose $i$-th columns is $x_i$ and let $\bar{X}=n^{-1/2} X$. Denote by $\bar{X}= \sum_{i=1}^{\textrm{rank}(\bar{X})} \sigma_i u_i v_i^\top = U \Sigma V^\top$ the SVD of $\bar{X}$ and recall that $\bar{X}_k= \sum_{i=1}^k \sigma_i u_i v_i^\top$ is the best $k$-rank approximation of $\bar{X}$. Note that $\bar{X}\bar{X}^\top=C$ and therefore $\sigma_i^2 = \lambda_i(C)=\lambda_i$. Furthermore, the left singular vectors of $\bar{X}$, $u_1,\ldots,u_k$, coincide with the $k$ leading eigenvectors of the matrix $C$. Consider the preconditioner, 
$$
P^{-1/2}  = \sum_{i=1}^k \frac{u_i u_i^\top}{\sqrt{\lambda_i+\lambda}} + \frac{I-\sum_{i=1}^k  u_i u_i^\top}{\sqrt{\lambda_k+\lambda}}~,
$$
where $u_{k+1},\ldots, u_d$ are obtained from a completion of $u_1,\ldots,u_k$ to an orthonormal basis.
\begin{lemma} \label{lem:exactCondSketch}
Let $k \in [d]$ be a parameter and assume that the regularization parameter, $\lambda$, is larger than $\lambda_d$.
Using the exact sketched preconditioner, we obtain
\begin{equation} \label{eq:exactCondSketchCond}
\hkappa_{\tilde{L}} \le \frac{k \lambda_k+ \sum_{i>k} \lambda_i}{\lambda} + d ~.
\end{equation}
\end{lemma}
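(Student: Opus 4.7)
The plan is to invoke \thmref{thm:condEffect} and directly compute the trace and minimum eigenvalue of the matrix $M := P^{-1/2}(C+\lambda I)P^{-1/2}$. Since $C$ is symmetric and the leading $k$ vectors $u_1,\ldots,u_k$ in the definition of $P^{-1/2}$ are already eigenvectors of $C$, I will choose the completion $u_{k+1},\ldots,u_d$ to be eigenvectors of $C$ corresponding to the remaining eigenvalues $\lambda_{k+1},\ldots,\lambda_d$ (the tail $u_i$ lie in $\ker(C)$ when $\bar X$ has rank less than $d$, which is consistent with this choice). Thus $\{u_i\}_{i=1}^d$ simultaneously diagonalizes both $C+\lambda I$ and $P^{-1/2}$.

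With this ONB in hand, $P^{-1/2} u_i = (\lambda_i+\lambda)^{-1/2} u_i$ for $i \le k$ and $P^{-1/2} u_i = (\lambda_k+\lambda)^{-1/2} u_i$ for $i>k$, so
\[
M = \sum_{i=1}^k u_i u_i^\top + \sum_{i=k+1}^d \frac{\lambda_i+\lambda}{\lambda_k+\lambda}\, u_i u_i^\top.
\]
Reading off the spectrum, the eigenvalues of $M$ are $1$ (with multiplicity $k$) and $(\lambda_i+\lambda)/(\lambda_k+\lambda)$ for $i>k$. Since $\lambda_i \le \lambda_k$ for $i>k$, all tail eigenvalues are at most $1$, so $\lambda_d(M) = (\lambda_d+\lambda)/(\lambda_k+\lambda)$, and the trace is $k + \sum_{i>k}(\lambda_i+\lambda)/(\lambda_k+\lambda)$.

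Plugging these into \thmref{thm:condEffect} and clearing the common denominator gives
\[
\hkappa_{\tilde L} \le \tkappa \le \frac{k(\lambda_k+\lambda) + \sum_{i>k}\lambda_i + (d-k)\lambda}{\lambda_d+\lambda} = \frac{k\lambda_k + \sum_{i>k}\lambda_i + d\lambda}{\lambda_d+\lambda}.
\]
Finally I will use the assumption $\lambda \ge \lambda_d$ (so in particular $\lambda_d+\lambda \ge \lambda$) to conclude $\hkappa_{\tilde L} \le (k\lambda_k + \sum_{i>k}\lambda_i)/\lambda + d$, which is the desired bound.

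The whole argument is essentially a one-step spectral calculation, so there is no serious obstacle. The only point that requires care is justifying that the chosen completion $u_{k+1},\ldots,u_d$ can be taken to be eigenvectors of $C$ (so that $P^{-1/2}$ and $C+\lambda I$ are simultaneously diagonalized); this is immediate from the symmetry of $C$ and the fact that any orthonormal basis of the orthogonal complement of $\Span\{u_1,\ldots,u_k\}$ that diagonalizes $C$ on that subspace is a valid completion. Everything else is linear algebra that fits the final inequality without slack beyond the single step $\lambda_d+\lambda \ge \lambda$.
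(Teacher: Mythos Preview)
Your proof is correct and follows essentially the same approach as the paper: both compute the eigenvalues of $P^{-1/2}(C+\lambda I)P^{-1/2}$ in the joint eigenbasis, obtaining $1$ for $i\le k$ and $(\lambda_i+\lambda)/(\lambda_k+\lambda)$ for $i>k$, then apply \thmref{thm:condEffect}. The only cosmetic difference is that the paper drops $\lambda_d$ from the smallest eigenvalue (bounding $\lambda_d(M)\ge \lambda/(\lambda_k+\lambda)$) before forming the ratio, whereas you keep the exact $\lambda_d(M)=(\lambda_d+\lambda)/(\lambda_k+\lambda)$ and apply $\lambda_d+\lambda\ge\lambda$ at the end; these are the same relaxation in a different order.
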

\begin{proof} %\textbf{(of \lemref{lem:exactCondSketch})}
A simple calculation shows that for $i=1,\ldots,k$, 
\[
\lambda_{i} (P^{-1/2} (C + \lambda I ) P^{-1/2} )  = \frac{\lambda_i+\lambda}{\lambda_i+\lambda} = 1 ~.
\]
Similarly, for $i=k+1,\ldots, d$,
\[
\lambda_{i} (P^{-1/2} (C + \lambda I ) P^{-1/2} ) = \frac{\lambda_i+\lambda}{\lambda_k+\lambda} ~.
\]
Finally,
\[
\lambda_{d} (P^{-1/2} (C + \lambda I ) P^{-1/2}) \ge \frac{\lambda}{\lambda_k+\lambda}~.
\]
Combining the above with \thmref{thm:condEffect}, we obtain that
\begin{align*} %\label{eq:exactDerive}
\hkappa_{\tilde{L}} &\le \frac{\tr(P^{-1/2} (C + \lambda I ) P^{-1/2})}{\lambda_d(P^{-1/2} (C + \lambda I ) P^{-1/2})} \\
&\le  k\frac{\lambda_k+\lambda}{\lambda} + \sum_{i=k+1}^d \frac{\lambda_i+\lambda}{\lambda}  \\
&= \frac{k \lambda_k + \sum_{i>k} \lambda_i}{\lambda} + d ~. \qedhere
\end{align*}
\end{proof}

\subsection{Sketched conditioning} \label{sec:lowCond}
An exact computation of the SVD of the matrix $\bar{X}$ takes $O(nd^2)$. Instead, we will use the Block Lanczos method in order to approximate the truncated SVD of $\bar{X}$. Specifically, given a parameter $k \in [d]$, we invoke the Block Lanczos method with the parameters $\bar{X},k$ and $\epsilon'=1/2$. Recall that the output has the form $\tilde{X}_k = \tU _k \tSig_k \tV_k^\top= \sum_{i=1}^k \tsig_i\tu_i \tv_i^\top$. Analogously to the exact sketched preconditioner, we define our sketched preconditioner by
\begin{equation} \label{eq:condSketch}
%P^{-1/2}  = \sum_{i=1}^k (\tsig_i^2+\lambda)^{-1/2} \tu_i \tu_i^\top + (\tsig_k^2+\lambda)^{-1/2} (I-\sum_{i=1}^k  \tu_i \tu_i^\top) ~.
P^{-1/2}  = \sum_{i=1}^k \frac{\tu_i \tu_i^\top}{\sqrt{\tsig_i^2+\lambda}}  + \frac{I-\sum_{i=1}^k  \tu_i \tu_i^\top}{\sqrt{\tsig_k^2+\lambda}} ~.
\end{equation}
\begin{theorem}  \label{thm:sketchCond}
Let $k \in [d]$ be a parameter and assume that the regularization parameter, $\lambda$, is larger that $\lambda_d$. Using the sketched preconditioner defined in \eqref{eq:condSketch}, up to a multiplicative constant, we obtain the bound \eqref{eq:exactCondSketchCond} on the average condition number with probability at least $9/10$.
\end{theorem}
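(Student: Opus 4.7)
The plan is to apply \thmref{thm:condEffect}, which requires bounding $\tr(M)$ from above and $\lambda_d(M)$ from below for $M := P^{-1/2}(C+\lambda I) P^{-1/2}$, and showing that their ratio matches \eqref{eq:exactCondSketchCond} up to constants. Throughout I will use the per-vector bound $|\tsig_i^2-\lambda_i| \le \tfrac{1}{2}\lambda_{k+1}$ and the spectral bound $\|\bar{X} - \tX_k\| \le \tfrac{3}{2}\sqrt{\lambda_{k+1}}$ from \thmref{thm:musco} with $\epsilon'=1/2$, which hold jointly with probability at least $9/10$. Two consequences I will use repeatedly: $\tsig_k^2 + \lambda \ge \tfrac{1}{2}(\lambda_k + \lambda)$ (since $\lambda_{k+1} \le \lambda_k$), and $\tU_k^\top \bar{X} = \tSig_k \tV_k^\top$, which follows by inspecting the construction of $\tU_k$ in \algref{alg:musco}. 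The latter implies that $\tU_k^\top C \tU_k = \tSig_k^2$ is \emph{diagonal} in the $\tu_i$-basis, and that $(I-\tU_k\tU_k^\top)\bar{X} = \bar{X} - \tX_k$, so the spectral bound yields $\|(I-\tU_k\tU_k^\top)\bar{X}\| \le \tfrac{3}{2}\sqrt{\lambda_{k+1}}$.

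For the trace, evaluate $\tr(M) = \sum_i \tu_i^\top M \tu_i$ in the orthonormal basis obtained by completing $\tu_1,\ldots,\tu_k$ arbitrarily. For $i \le k$ each diagonal entry is $(\tsig_i^2 + \lambda)/(\tsig_i^2 + \lambda) = 1$. For $i > k$ the entries share denominator $\tsig_k^2+\lambda$ and sum to $(\tsig_k^2+\lambda)^{-1}\bigl(\tr(C) - \sum_{i\le k}\tsig_i^2 + (d-k)\lambda\bigr)$, which by the per-vector bound is at most $(\tsig_k^2+\lambda)^{-1}\bigl(\sum_{i>k}\lambda_i + \tfrac{k}{2}\lambda_k + (d-k)\lambda\bigr)$. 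Combined with the lower bound on $\tsig_k^2+\lambda$, this yields $\tr(M) \le k + O\bigl((k\lambda_k + \sum_{i>k}\lambda_i)/(\lambda_k + \lambda)\bigr) + O(d)$, in direct analogy with \lemref{lem:exactCondSketch}.

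The main obstacle is the lower bound on $\lambda_d(M)$: the naive bound $M \succeq \lambda P^{-1}$ only yields $\lambda/(\tsig_1^2+\lambda)$, which is far too weak since $\tsig_1^2$ can be of order $\lambda_1 \gg \lambda_k$. For a unit $v$, decompose $v = v_W + v_\perp$ with $v_W = \tU_k\tU_k^\top v$ and set $\alpha = \|v_W\|$, $\beta = \|v_\perp\|$. Expanding $v^\top M v$ and collapsing terms via the diagonal identity $\tU_k^\top C\tU_k = \tSig_k^2$, the non-cross contributions assemble into
\[
\alpha^2 \;+\; \frac{v_\perp^\top C v_\perp + \lambda\beta^2}{\tsig_k^2 + \lambda},
\]
while the cross term equals $\tfrac{2}{\sqrt{\tsig_k^2+\lambda}}\sum_{i\le k}\tfrac{(\tu_i^\top v)\,\tu_i^\top C v_\perp}{\sqrt{\tsig_i^2+\lambda}}$. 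Using $\tu_i^\top C v_\perp = \tsig_i\,\tv_i^\top \bar{X}^\top v_\perp$ and Cauchy--Schwarz, its absolute value is at most $2\alpha\gamma$ with $\gamma := \sqrt{v_\perp^\top C v_\perp/(\tsig_k^2+\lambda)}$. The spectral bound on $(I-\tU_k\tU_k^\top)\bar{X}$ then gives $\gamma \le 3\beta$. Hence $v^\top M v \ge (\alpha - \gamma)^2 + \lambda\beta^2/(\tsig_k^2+\lambda)$, and a case split on whether $\alpha \ge 6\beta$ or not (using $\alpha^2 + \beta^2 = 1$) together with the bound $\tsig_k^2 + \lambda \le \tfrac{3}{2}(\lambda_k + \lambda)$ yields $v^\top M v \ge c\,\lambda/(\lambda_k + \lambda)$ for a universal constant $c>0$.

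Combining the two bounds gives $\tkappa \le O\bigl((k\lambda_k + \sum_{i>k}\lambda_i)/\lambda + d\bigr)$; \thmref{thm:condEffect} then transfers this to $\hkappa_{\tL}$, yielding the same bound as \eqref{eq:exactCondSketchCond} up to constants and completing the proof.
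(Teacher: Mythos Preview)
Your argument is correct, and it takes a genuinely different route from the paper's.

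For the trace, the paper bounds the eigenvalues of $M=P^{-1/2}(C+\lambda I)P^{-1/2}$ one at a time: the first $k$ via a bound $\lambda_1(M)\le 17$ obtained from a triangle-inequality decomposition $\bar X = \tX_k + (\bar X-\tX_k)$, and the last $d-k$ via the Courant minimax principle, giving $\lambda_i(M)\le 2(\lambda_i+\lambda)/(\lambda_k+\lambda)$. You instead exploit the structural identity $\tU_k^\top\bar X=\tSig_k\tV_k^\top$ (equivalently $\tX_k=\tU_k\tU_k^\top\bar X$, stated just before \thmref{thm:musco}) to see that $\tU_k^\top C\,\tU_k$ is exactly $\tSig_k^2$, which makes the first $k$ diagonal entries of $M$ equal to $1$ and lets you compute the trace directly. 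This is shorter and avoids the eigenvalue-by-eigenvalue argument.

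For the lower bound on $\lambda_d(M)$, the paper passes to the inverse, writing $\lambda_d(M)^{-1}=\|P^{1/2}(C+\lambda I)^{-1}P^{1/2}\|$, introduces an auxiliary matrix $\tZ=\tU(\tSig^2+\lambda I)^{1/2}\tV^\top$, and controls $\|\tZ^\top(C+\lambda I)^{-1}\tZ\|$ by comparing $\tZ$ to $\bar X$. Your argument stays on the primal side: you split a unit vector into its $\tU_k$-component and complement, use the same identity $\tU_k^\top\bar X=\tSig_k\tV_k^\top$ to diagonalize the in-subspace part and rewrite the cross term, and then complete the square to get $(\alpha-\gamma)^2+\lambda\beta^2/(\tsig_k^2+\lambda)$. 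The case split finishes it. This avoids matrix inversion and the auxiliary $\tZ$, and is arguably more elementary.

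The trade-off is that your proof leans on the specific form of the Block Lanczos output (that $\tX_k$ is exactly the projection $\tU_k\tU_k^\top\bar X$), whereas the paper's lemmas use only the spectral and per-vector guarantees of \thmref{thm:musco} together with the fact that $\tX_k=\tU_k\tSig_k\tV_k^\top$ is a rank-$k$ SVD. So the paper's argument transfers verbatim to any sketch satisfying those guarantees, while yours needs the projection identity in addition.
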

The rest of this section is devoted to the proof of \thmref{thm:sketchCond}. We follow along the lines of the proof of \lemref{lem:exactCondSketch}. Up to a multiplicative constant, we derive the same upper and lower bounds on the eigenvalues of $P^{-1/2} (C+\lambda I)P^{-1/2}$. 

From now on, we assume that the bounds in \thmref{thm:musco} (where $\epsilon'=1/2$) hold. This assumption will be valid with probability of at least $9/10$. We next introduce some notation. We can rewrite $P^{-1/2}  = \tU (\tSig^2+\lambda I)^{-1/2} \tU^\top$ where $\tSig$ is a diagonal $d \times d$ with $\tSig_{i,i} = \tsig_i$ if $i \le k$ and $\tSig_i = \tsig_k$ if $i>k$.
% \[
% \tSig_{i,i} = \begin{cases} \tsig_i  & i \le k \\ \tsig_k  & i > k  \end{cases}
% \] 
and the columns of $\tU$ are a completion of  $\tu_1,\ldots,\tu_k$ to an orthonormal basis. Recall that the SVD of $\bar{X}$ is denoted by $\bar{X} = \sum_{i=1}^d \sigma_i u_i v_i^\top = U \Sigma V^\top$. 
\begin{lemma} \textbf{(Upper bound on the leading eigenvalue)} We have
\[
\lambda_1(P^{-1/2} (C+\lambda I)P^{-1/2})  \le 17~.
\] 
\end{lemma}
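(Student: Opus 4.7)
The plan is to bound $\lambda_1(P^{-1/2}(C+\lambda I)P^{-1/2})$ by comparing $C+\lambda I$ to $P$ in the Loewner (PSD) order, and then reading off the bound on the ratio eigenvalue-by-eigenvalue in the eigenbasis of $P$.

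First, I would write $\bar{X}=\tilde{X}_k+E$ where $E=\bar{X}-\tilde{X}_k$ and, by \thmref{thm:musco} applied with $\epsilon'=1/2$, $\|E\|\le (3/2)\sigma_{k+1}$. Expanding $C=\bar{X}\bar{X}^\top=(\tilde{X}_k+E)(\tilde{X}_k+E)^\top$ and using the elementary PSD inequality $(A+B)(A+B)^\top\preceq 2AA^\top+2BB^\top$, I obtain $C\preceq 2\tilde{X}_k\tilde{X}_k^\top+2EE^\top$. Since $\tilde{X}_k\tilde{X}_k^\top=\sum_{i=1}^k\tsig_i^2\tu_i\tu_i^\top$ and $EE^\top\preceq\|E\|^2 I\le (9/4)\sigma_{k+1}^2 I$, and since $\lambda I$ also decomposes along $\tu_i\tu_i^\top$ and its orthogonal complement, I can upper bound $C+\lambda I$ by a matrix that is diagonal in the basis adapted to $\tu_1,\ldots,\tu_k$.

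Next, using that $P^{-1/2}$ is diagonal in the same basis, with eigenvalue $(\tsig_i^2+\lambda)^{-1/2}$ on $\tu_i$ for $i\le k$ and $(\tsig_k^2+\lambda)^{-1/2}$ on the orthogonal complement, I can conjugate the bound by $P^{-1/2}$. The largest eigenvalue of the resulting upper bound is the maximum of the ratios
\[
\frac{2\tsig_i^2+(9/2)\sigma_{k+1}^2+\lambda}{\tsig_i^2+\lambda}\qquad(i\le k),\qquad \frac{(9/2)\sigma_{k+1}^2+\lambda}{\tsig_k^2+\lambda}.
\]

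The key step is controlling these ratios by an absolute constant, and this is where the per-vector guarantee \eqref{eq:perVec} is essential: it yields $\tsig_k^2\ge \sigma_k^2-(1/2)\sigma_{k+1}^2\ge(1/2)\sigma_{k+1}^2$, so $\sigma_{k+1}^2\le 2\tsig_k^2\le 2\tsig_i^2$ for all $i\le k$. Plugging this in, each numerator is bounded by a constant multiple of its denominator, and the maximum is at most $17$. A plain spectral-norm error $\|E\|\le(3/2)\sigma_{k+1}$ on its own would not suffice here because $\sigma_{k+1}^2$ could dwarf $\tsig_k^2$ without the per-vector bound; this translation from a Frobenius/spectral approximation guarantee to a per-direction relative bound is the main obstacle, and it is precisely what \cite{musco2015stronger} provides.
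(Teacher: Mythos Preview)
Your proof is correct, and in fact yields a sharper constant than the one stated (your ratios are bounded by $11$ for $i\le k$ and by $9$ on the orthogonal complement, both well under $17$). The approach, however, is genuinely different from the paper's.

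The paper works at the level of operator norms: it splits off $\lambda\|P^{-1}\|\le 1$, writes $\|P^{-1/2}CP^{-1/2}\|=\|\bar{X}^\top P^{-1/2}\|^2$, and then applies the triangle inequality to $\bar{X}^\top P^{-1/2}=(\tX_k+E)^\top P^{-1/2}$. The term $\|\tX_k^\top P^{-1/2}\|$ is handled by unitary invariance and gives $\le 1$; the cross term is bounded by submultiplicativity as $\|E\|\cdot\|P^{-1/2}\|\le (3/2)\sigma_k\cdot\sqrt{2}/\sigma_k<3$, using the per-vector bound in the form $\tsig_k^2\ge(1/2)\sigma_k^2$. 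Squaring $1+3=4$ and adding $1$ gives $17$.

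You instead work in the Loewner order: the inequality $(A+B)(A+B)^\top\preceq 2AA^\top+2BB^\top$ replaces the norm-triangle-then-square step, and because your upper bound is diagonal in the $\tu_i$ basis you can read off the conjugated eigenvalues exactly rather than appealing to submultiplicativity. Both arguments hinge on the same per-vector estimate $\tsig_k^2\ge\sigma_k^2-\tfrac12\sigma_{k+1}^2$ from \thmref{thm:musco}, which is the essential ingredient linking $\sigma_{k+1}$ back to $\tsig_k$. Your route is a bit more direct and avoids the loss from squaring a sum of norms, which is why your constant comes out smaller; the paper's route is slightly more elementary in that it never invokes a matrix inequality beyond submultiplicativity and unitary invariance.
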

\begin{proof}
Since $\lambda_1(P^{-1/2} (C+\lambda I)P^{-1/2}) = \|P^{-1/2} (C+\lambda I)P^{-1/2}\| = \|P^{-1/2} C P^{-1/2} + \lambda P^{-1}\|$, using the triangle inequality we have that
\[
\lambda_1(P^{-1/2} (C+\lambda I)P^{-1/2}) ~\le~ \|P^{-1/2} C P^{-1/2}\| + \lambda \|P^{-1}\| ~.
\]
By the definition of $P$ we have that $\|P^{-1}\| = \frac{1}{\tsig_k^2 + \lambda}$ and therefore the second summand on the right hand side of the above is at most $\frac{\lambda}{\tsig_k^2+\lambda} \le 1$. As to the first summand, recall that $C = \bar{X} \bar{X}^\top$ and therefore $\|P^{-1/2} C P^{-1/2}\| = \|\bar{X}^\top P^{-1/2} \|^2$. We will show that $\|\bar{X}^\top P^{-1/2} \| \le 4$ which will imply that $\|P^{-1/2} C P^{-1/2}\| \le 16$. 
To do so, we first apply the triangle inequality,
\begin{align*}
&\|\bar{X}^\top P^{-1/2}\|  = \| (\tX_k + (\bar{X} - \tX_k))^\top P^{-1/2} \| \\
&\quad \le \| \tX_k^\top P^{-1/2} \| + \|(\bar{X} - \tX_k)^\top P^{-1/2} \| ~.
\end{align*}
Let us consider one term at the time. Recall that $\tX_k = \tU _k \tSig_k \tV_k^\top$. Since $\tU_k^\top \tU  \in \reals^{k,d}$ is a diagonal matrix with ones on the diagonal, and since the spectral norm is invariant to multiplication by unitary matrices, we obtain that
\begin{align*}
&\|\tX_k^\top P^{-1/2}\| = \|\tV_k \tSig_k \tU_k^\top \tU (\tSig^2+\lambda I)^{-1/2} \tU^\top\| \\
&\quad = \|\tSig_k \tU_k^\top \tU (\tSig^2+\lambda I)^{-1/2} \| \\
&\quad = \max_{i \in [k]} \frac{\tsig_i}{ \sqrt{\tsig_i^2+\lambda}} 
\le \max_{i \in [k]} \frac{\tsig_i}{\tsig_i+\sqrt{\lambda}} 
\le 1 ~.
\end{align*}
Next, by the submutiplicativity of the spectral norm,
\[
\|(\bar{X} - \tX_k)^\top P^{-1/2} \| \le \|\bar{X}-\tX_k\| \cdot \|P^{-1/2}\| ~.
\]
\thmref{thm:musco} implies that $\|\bar{X}-\tX_k\| \le \tfrac{3}{2} \sigma_k$ and
\begin{align*}
&\|P^{-1/2}\| = \frac{1}{ \sqrt{\tsig_k^2+\lambda}} 
\le \frac{1}{\sqrt{\tsig_k^2}} \le \frac{1}{ \sqrt{\sigma_k^2 - (1/2) \sigma_{k+1}^2}} \\
&\quad \le \frac{1}{ \sigma_{k} \sqrt{\tfrac{1}{2}} } = \frac{ \sqrt{2}}{ \sigma_k} 
<\frac{ 2}{\sigma_k}~.
\end{align*}
Hence, $\|\bar{X}-\tX_k\| \cdot \|P^{-1/2}\| \le 3$. 
Combining all of the above bounds concludes our proof. 
\end{proof}
\begin{lemma} \textbf{(Refined upper bound on the last $d-k$ eigenvalues)} For any $i \in \{k+1,\ldots,d\}$,
\[
\lambda_i \left(P^{-1/2} (C+ \lambda I)P^{-1/2} \right) \le \frac{2 (\lambda_i+\lambda)}{\lambda_k+\lambda}~.
\]
\end{lemma}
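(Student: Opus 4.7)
The plan is to use the Courant--Fischer min-max characterization after a change of variables, so that the bound becomes a generalized Rayleigh quotient $u^\top(C+\lambda I)u / u^\top P u$. First I would observe that for invertible $P^{1/2}$, the substitution $v = P^{1/2} u$ gives
\[
\lambda_i(P^{-1/2}(C+\lambda I)P^{-1/2}) \;=\; \min_{\dim W = d-i+1}\; \max_{u \in W,\, u \ne 0}\; \frac{u^\top(C+\lambda I)u}{u^\top P u},
\]
since $P^{-1/2}$ is a bijection on subspaces of any fixed dimension. This reduction turns the question into one of picking a single good witness subspace $W$.

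The witness I would use is $W = \Span(u_i, u_{i+1}, \ldots, u_d)$, the span of the bottom $d-i+1$ eigenvectors of $C$. For the numerator, any $u \in W$ satisfies $u^\top(C+\lambda I)u \le (\lambda_i+\lambda)\|u\|^2$ since $u$ lies in the span of eigenvectors with eigenvalue at most $\lambda_i$. For the denominator, I would use the explicit form $P = \tU(\tSig^2 + \lambda I)\tU^\top$, where $\tSig$ is diagonal with entries $\tsig_1,\ldots,\tsig_k,\tsig_k,\ldots,\tsig_k$; since $\tsig_k$ is the smallest of these and $\tU$ is orthogonal, $u^\top P u \ge (\tsig_k^2 + \lambda)\|u\|^2$ for every $u$. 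Combining these two bounds gives
\[
\lambda_i(P^{-1/2}(C+\lambda I)P^{-1/2}) \;\le\; \frac{\lambda_i+\lambda}{\tsig_k^2 + \lambda}.
\]

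The final step is to convert $\tsig_k^2$ into $\lambda_k$ using the per-vector guarantee \eqref{eq:perVec} with $\epsilon'=1/2$, which yields $\tsig_k^2 \ge \sigma_k^2 - \tfrac{1}{2}\sigma_{k+1}^2 \ge \tfrac{1}{2}\sigma_k^2 = \tfrac{1}{2}\lambda_k$. Therefore $\tsig_k^2 + \lambda \ge \tfrac{1}{2}\lambda_k + \lambda \ge \tfrac{1}{2}(\lambda_k+\lambda)$, and plugging this into the display above gives exactly the claimed bound $2(\lambda_i+\lambda)/(\lambda_k+\lambda)$.

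I expect the only non-routine step to be getting the min-max direction right and realizing that replacing $v$ by $P^{1/2}u$ is the right substitution to decouple $P$ from $C+\lambda I$; after that, the $W$ spanned by the trailing eigenvectors of $C$ is a very natural choice, and the rest is Rayleigh-quotient bookkeeping plus a one-line appeal to the per-vector guarantee of \thmref{thm:musco}. Note that, unlike the assumption $\lambda \ge \lambda_d$ used elsewhere, this bound actually holds unconditionally.
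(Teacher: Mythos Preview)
Your proof is correct and follows essentially the same approach as the paper: both invoke the Courant--Fischer principle together with the change of variables $v = P^{1/2}u$ (equivalently $y = P^{-1/2}x$) to arrive at the intermediate bound $(\lambda_i+\lambda)/(\tsig_k^2+\lambda)$, and then finish with the same appeal to the per-vector guarantee $\tsig_k^2 \ge \tfrac{1}{2}\lambda_k$. The only cosmetic difference is that the paper uses the max--min form and factors the Rayleigh quotient as $\frac{x^\top P^{-1/2}(C+\lambda I)P^{-1/2}x}{\|P^{-1/2}x\|^2}\cdot\frac{\|P^{-1/2}x\|^2}{\|x\|^2}$, bounding the second factor uniformly by $\lambda_1(P^{-1})$, whereas you use the dual min--max form with an explicit witness subspace; the resulting bound is identical.
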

\begin{proof}
Using the Courant minimax principle~\cite{bhatia2013matrix}, we obtain the following bound for all $i \in \{k+1,\ldots,d\}$:
\begin{align*}
&\lambda_i \left(P^{-1/2} (C+ \lambda I ) P^{-1/2} \right) \\
&= \max_{\substack{\mathcal{M} \subseteq \reals^d:\\ \dim(\mathcal{M}) =i}}\min_{\substack{x \in \cM:\\x \neq 0}} \frac{  x^\top P^{-1/2} (C+\lambda I) P^{-1/2} x}{\|x\|^2} \\
& = \max_{\substack{\mathcal{M} \subseteq \reals^d:\\ \dim(\mathcal{M}) =i}}\min_{\substack{x \in \cM:\\ x \neq 0}} \frac{  x^\top P^{-1/2} (C+\lambda I) P^{-1/2} x}{\|P^{-1/2}x\|^2} \cdot \frac{\|P^{-1/2}x\|^2} {\|x\|^2} \\
& \le \left(\max_{\substack{\mathcal{M} \subseteq \reals^d:\\ \dim(\mathcal{M}) =i}}\min_{\substack{x \in \cM: \\x \neq 0}} \frac{  x^\top P^{-1/2} (C+\lambda I) P^{-1/2} x}{\|P^{-1/2}x\|^2}\right) \times\\
& \qquad \left( \max_{\substack{x \in \reals^d:\\ x \neq 0}} \frac{x^\top P^{-1} x}{\|x\|^2} \right)\\
&= \lambda_i \left(C+\lambda  I \right) \cdot \lambda_1(P^{-1})  = (\lambda_i+\lambda) \cdot (\tilde{\sigma}_k^2+\lambda)^{-1} ~.
%\\ & \le \frac{2 (\lambda_i+\lambda)}{\lambda_k+\lambda} ~,
\end{align*}
Finally, using  \thmref{thm:musco} we have that $\tsig_k^2 \ge \sigma_k^2 - \frac{1}{2} \sigma_{k+1}^2 \ge \tfrac{1}{2} \sigma_k^2 = \tfrac{1}{2} \lambda_k$ and therefore,
\[
(\tilde{\sigma}_k^2+\lambda)^{-1} ~\le~ (\tfrac{1}{2} \lambda_k +\lambda)^{-1} ~\le~ 2\, (\lambda_k+\lambda)^{-1} ~. \qedhere
\]
\end{proof}
\begin{lemma} \label{lem:lowerSmall} \textbf{(Lower bound on the smallest eigenvalue)}
\[
\lambda_d( P^{-1/2} C P^{-1/2}) \ge \frac{\lambda}{19 (\lambda_k+\lambda)}~.
\]
\end{lemma}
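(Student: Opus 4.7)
The plan is to lower bound $\lambda_d(P^{-1/2}(C+\lambda I)P^{-1/2})$ directly (the statement should be read with $+\lambda I$, as in \lemref{lem:exactCondSketch}, since only the regularized matrix is uniformly positive). First I would decompose any unit vector $x \in \reals^d$ as $x = \tU_k a + \beta v$, with $v \perp \tU_k$, $\|v\|=1$, $\|a\|^2 + \beta^2 = 1$. The pivotal structural fact is the \emph{exact} identity $\tU_k^\top C \tU_k = \tSig_k^2$: from $\tX_k = \tU_k \tU_k^\top \bar{X} = \tU_k \tSig_k \tV_k^\top$ one reads $\bar{X}^\top \tu_i = \tsig_i \tv_i$ for $i \le k$, and hence $\tu_i^\top C \tu_j = \tsig_i \tsig_j \tv_i^\top \tv_j = \tsig_i^2 \delta_{ij}$.

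Since $P^{-1/2}$ acts as multiplication by $(\tsig_i^2+\lambda)^{-1/2}$ on $\tu_i$ and by $(\tsig_k^2+\lambda)^{-1/2}$ on the orthogonal complement of $\tU_k$, plugging in and collecting terms would yield
\begin{align*}
x^\top P^{-1/2}(C+\lambda I) P^{-1/2} x = \|a\|^2 &+ \frac{2\beta}{\sqrt{\tsig_k^2+\lambda}}\sum_{i=1}^k \frac{\tsig_i a_i}{\sqrt{\tsig_i^2+\lambda}}\tv_i^\top \bar{X}^\top v \\
&+ \frac{\beta^2(v^\top C v + \lambda)}{\tsig_k^2+\lambda}.
\end{align*}
The $\|a\|^2$ and $\beta^2 \lambda/(\tsig_k^2+\lambda)$ summands recover the ``clean'' contributions from the exact case (\lemref{lem:exactCondSketch}); the cross term is the new difficulty caused by the inexactness of the Block Lanczos output.

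To tame the cross term, I would apply Cauchy--Schwarz together with $\tsig_i/\sqrt{\tsig_i^2+\lambda} \le 1$ and $\|\tV_k^\top \bar{X}^\top v\| \le \|\bar{X}^\top v\| = \sqrt{v^\top C v}$, bounding its absolute value by $2|\beta|(\tsig_k^2+\lambda)^{-1/2} \|a\| \sqrt{v^\top C v}$. The key quantitative ingredient is that $v\perp \tU_k$ forces
\[
v^\top C v \le \|\tU_{d-k}^\top C \tU_{d-k}\| = \|\bar{X}-\tX_k\|^2 \le (3/2)^2\lambda_{k+1} \le (9/4)\lambda_k,
\]
where the middle equality uses $\tX_k = \tU_k \tU_k^\top \bar{X}$ and the spectral bound is \thmref{thm:musco} with $\epsilon'=1/2$. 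Combined with $\tsig_k^2 \in [\tfrac12\lambda_k, \tfrac32\lambda_k]$ (again from \thmref{thm:musco}), Young's inequality $2pq \le \eta p^2 + \eta^{-1} q^2$ with $\eta \in (0,1)$ would give
\[
x^\top P^{-1/2}(C+\lambda I) P^{-1/2} x \ge (1-\eta)\|a\|^2 + \frac{\beta^2\bigl[\lambda - \tfrac{9}{4}(\eta^{-1}-1)\lambda_k\bigr]}{\tsig_k^2+\lambda}.
\]

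Finally, using $\tsig_k^2+\lambda \le \tfrac32(\lambda_k+\lambda)$, I would pick $\eta$ so that both coefficients exceed $\lambda/[19(\lambda_k+\lambda)]$. A short calculation shows that the choice $\eta = 171\lambda_k/(171\lambda_k+70\lambda)$ makes the $\beta^2$-coefficient inequality tight, and the $\|a\|^2$-coefficient inequality then reduces to $1159\lambda_k + 1260\lambda \ge 0$, trivially true. Since $\|a\|^2+\beta^2=1$, the claimed bound follows. The main obstacle throughout is the cross term: without the spectral control on $\|\bar{X}-\tX_k\|$ afforded by \thmref{thm:musco}, this term could overwhelm the $\lambda/(\lambda_k+\lambda)$ signal, which is precisely where (as discussed in the related-work section) Frobenius-norm sketching guarantees would be insufficient.
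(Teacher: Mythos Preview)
Your proof is correct, and it takes a genuinely different route from the paper's argument. The paper passes to the inverse, writing $\lambda_d(P^{-1/2}(C+\lambda I)P^{-1/2}) = \|P^{1/2}(C+\lambda I)^{-1}P^{1/2}\|^{-1}$, then introduces the auxiliary matrix $\tZ = \tU(\tSig^2+\lambda I)^{1/2}\tV^\top$ and controls $\|\tZ^\top(C+\lambda I)^{-1}\tZ\|$ by a perturbation comparison with $\|\bar{X}^\top(C+\lambda I)^{-1}\bar{X}\|$ and a bound on $\|\tZ-\bar{X}\|$. You instead lower bound the quadratic form directly via the orthogonal decomposition $x=\tU_k a+\beta v$, and the engine of your argument is the \emph{exact} identity $\tU_k^\top C\,\tU_k=\tSig_k^2$, which comes from the projection structure $\tX_k=\tU_k\tU_k^\top\bar{X}$ of the Block Lanczos output. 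This identity makes the ``diagonal'' part of the quadratic form collapse to $\|a\|^2$ exactly, isolating a single cross term that you dispatch with Cauchy--Schwarz, the spectral bound $v^\top Cv\le\|\bar{X}-\tX_k\|^2$, and a Young-inequality trade-off.

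What each approach buys: your argument is more elementary (no matrix inverse, no auxiliary $\tZ$ or completions $\tU,\tV$) and makes transparent \emph{where} the Block Lanczos structure is used beyond the guarantees of \thmref{thm:musco}. The paper's route is more ``black-box'' in the sketch---it never invokes the identity $\tU_k^\top\bar{X}=\tSig_k\tV_k^\top$ and relies only on the spectral and per-vector error bounds---so it would transfer more readily to a sketching method that delivers the bounds of \thmref{thm:musco} without producing a projection of the form $\tU_k\tU_k^\top\bar{X}$. Your observation that the lemma statement should carry the $+\lambda I$ (as in the paper's own proof and in \lemref{lem:exactCondSketch}) is also correct.
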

\begin{proof}
Note that
\begin{equation} \label{eq:digging0}
\lambda_d (P^{-1/2} (C+\lambda I) P^{-1/2}) = \frac{1}{\|P^{1/2} (C+\lambda I)^{-1} P^{1/2}\|}~,
\end{equation}
so we can derive an upper bound on $\|P^{1/2} (C+\lambda I)^{-1} P^{1/2}\|$.
Consider an arbitrary completion of $\tilde{v}_1,\ldots,\tilde{v}_k$ to an orthonormal set, $\tilde{v}_1,\ldots,\tilde{v}_d \in \reals^n$. Let $\tilde{V} \in \reals^{n \times d}$ be the matrix whose $i$-th column is $\tv_i$. 
Since the spectral norm is unitary invariant and both $\tU$ and $\tV$ have orthonormal columns,
\begin{align*} %\label{eq:digging1}
&\|P^{1/2} (C+\lambda I)^{-1} P^{1/2}\| \notag\\
&= \|\tU (\tSig^2+\lambda I)^{1/2} \tU^\top (C+\lambda I)^{-1} \tU   (\tSig^2+\lambda I)^{1/2} \tU^\top\| \notag \\
&= \|\tV   (\tSig^2+\lambda I)^{1/2} \tU^\top (C+\lambda I)^{-1} \tU   (\tSig^2+\lambda I)^{1/2} \tV^\top\| ~.
\end{align*}
Denote by $\tZ = \tU   (\tSig^2+\lambda I)^{1/2} \tV^\top$. By the triangle inequality and the submutiplicativity of the spectral norm,
\begin{align}
&\|\tZ^\top(C+\lambda I)^{-1} \tZ\| \le \|\bar{X}^\top (C+\lambda I)^{-1} \bar{X}\| \notag\\
&\qquad + \|(\tZ-\bar{X})^\top (C+\lambda I)^{-1} (\tZ-\bar{X})\| \notag \\
& \le \|\bar{X}^\top (C+\lambda I)^{-1} \bar{X}\|+  \|\tZ-\bar{X}\|^2 \|(C+\lambda I)^{-1}\|~. \label{eq:digging15}
\end{align}
To bound the first summand of \eqref{eq:digging15}, we use the unitary invariance to obtain
\begin{align*} % \label{eq:digging2}
&\|\bar{X}^\top (C+\lambda I)^{-1} \bar{X} \| = \|V \Sigma U^\top U (\Sigma^2 +\lambda I)^{-1} U^\top U \Sigma V^\top\| \notag\\
&\quad = \|\Sigma  (\Sigma^2 +\lambda I)^{-1} \Sigma \| = \max_i \frac{\lambda_i^2}{\lambda_i^2+\lambda} \le 1~.
\end{align*}
% & = \|\tV   (\tSig^2+\lambda I)^{1/2} \tU^\top\|^2 \|(C+\lambda I)^{-1}\| ~,
% \end{align*}
% where the last inequality follows from the submultiplicativity of the spectral norm.
For the second summand of \eqref{eq:digging15}, note that $\|(C+ \lambda I)^{-1}\|= \frac{1}{\lambda_d+\lambda}$ and that, using the triangle inequality,
\begin{align*}
&\|\tZ - \bar{X}\| = \|(\tU  \tSig \tV^\top - \bar{X}) + (\tZ -  \tU  \tSig \tV^\top) \|\\
&\quad \le \|\tU \tSig \tV^\top - \bar{X}\| + \|\tU ((\tSig^2+\lambda I)^{1/2}- \tSig) \tV^\top\|~.
\end{align*}
By using unitary invariance together with the inequality $\sqrt{\tsig_i^2+\lambda} - \tsig_i \le \sqrt{\lambda}$ (which holds for every $i$), we get
\[
\|\tU ((\tSig^2+\lambda I)^{1/2}- \tSig) \tV^\top\| = \|(\tSig^2+\lambda I)^{1/2}- \tSig\| \le \sqrt{\lambda}~.
\]
Hence, using the inequality $(x+y)^2 \le 2x^2 + 2y^2$, we obtain
\begin{align*}  %\label{eq:digging3}
\|\tZ - \bar{X}\|^2 &\le  2 \|\tU \tSig \tV^\top - \bar{X}\|^2 + 2\lambda ~.
\end{align*}
We next derive an upper bound on $\|\tU \tSig \tV^\top - \bar{X}\|$. Since $\tU \tSig \tV^\top = \tilde{X}_k + \tsig_k \sum_{i=k+1}^d \tu_i \tv_i^\top$, 
\[
\|\tU \tSig \tV^\top-\bar{X}\| \le \|\tilde{X}_k - \bar{X}\|+\tsig_k \left\|\sum_{i=k+1}^d \tu_i \tv_i^\top\right\| ~.
\]
Using \thmref{thm:musco} we know that $\|\tilde{X}_k - \bar{X}\| \le 1.5 \, \sigma_k$ and that $\tsig_k \le \sqrt{\sigma_k^2+0.5 \, \sigma_{k+1}^2} \le  1.5 \, \sigma_k$. Combining this with the fact that $\|\sum_{i=k+1}^d \tu_i \tv_i^\top\| = 1$, we obtain
\begin{align*} %\label{eq:digging4}
\|\tU \tSig \tV^\top-\bar{X}\| \le 3\,\sigma_k ~.
\end{align*} 
Combining the above inequalities, we obtain
\begin{align*}
\|P^{1/2} (C+\lambda I)^{-1} P^{1/2}\| 
&\le  1 + \frac{2 \cdot(3 \sigma_k)^2+2\lambda}{\lambda_d+\lambda} \\
&\le \frac{19(\lambda_k+\lambda)}{\lambda} ~,
\end{align*}
and using \eqref{eq:digging0} we conclude our proof.
\end{proof}
\begin{proof} \textbf{(of \thmref{thm:sketchCond})}
The three last lemmas imply that the inequalities derived during the proof of \lemref{lem:exactCondSketch} remain intact up to a multiplicative constant. Therefore, the bound \eqref{eq:exactCondSketchCond} on the condition number also holds up to a multiplicative constant. This completes the proof.
\end{proof}

\subsection{Sketched Preconditioned SVRG}
By equipping SVRG with the sketched preconditioner \eqref{eq:condSketch}, we obtain the Sketched Preconditioned SVRG (see \algref{alg:SCSVRG}). 
\begin{proof} \textbf{(of \thmref{thm:main})}
The theorem follows from \thmref{thm:sketchCond} and \thmref{thm:svrg}.
\end{proof}
\begin{algorithm}
\caption{Sketched Preconditioned SVRG}
\label{alg:SCSVRG}
\begin{algorithmic}[1]
\STATE \textbf{Input: } $x_1,\ldots, x_n \in \reals^d, y_1,\ldots,y_n \in \reals, \epsilon>0$
\STATE \textbf{Parameters: } $\lambda>0, k \in [d]$
\STATE Let $\bar{X} \in \reals^{d,n}$ be the matrix whose $i$'th column is $(1/n) x_i$
\STATE Run the Block Lanczos method (\algref{alg:musco}) with the input $\bar{X}, k, \epsilon'=1/2$ to obtain $\tX_k = \tU_k \tSig_k \tV_k$
\STATE Let $\tu_i$ be the columns of $\tU_k$ and $\tilde{\sigma}_i$ be the diagonal elements of $\tSig_k$
\STATE Form the preconditioner $P^{-1/2}$ according to \eqref{eq:condSketch}
\STATE Compute $\tx_i = P^{-1/2} x_i, b_i = P^{-1/2} e_i$
\STATE Let $\ell_i(w) = \frac{n+d}{n} \frac{1}{2} (w^\top \tx_i-y_i)^2$ for $i=1,\ldots,n$ and $\ell_i(w) = \lambda(n+d) (w^\top b_i)^2$ for $i=n+1,\ldots,n+d$
\STATE Let $\beta_i = \frac{n+d}{n} \|\tx_i\|^2$ for $i=1,\ldots,n$ and $\beta_i  = \lambda(n+d) \|b_i\|$ for $i=n+1,\ldots,n+d$. Let $\hat{\beta} = \frac{1}{n} \sum_{i=1}^{n+d} \beta_i$
\STATE Run SVRG (\algref{alg:svrg}) 
\STATE Return $\hat{w} = P^{1/2} \tilde{w}$
\end{algorithmic}
\end{algorithm}

\section{The Empirical Gain of Sketched Preconditioning} \label{sec:empirical}
\begin{figure}[t]
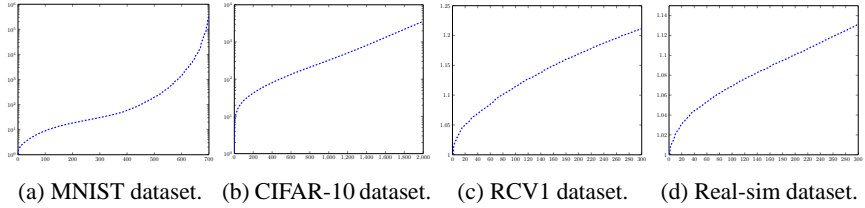

\centering
\begin{subfigure} [b] {0.23 \textwidth}
\resizebox{\linewidth}{!}{\input{MNIST_ratio.tex}}
\caption{MNIST dataset.}
\end{subfigure}
\begin{subfigure} [b] {0.23\textwidth}
\resizebox{\linewidth}{!}{\input{cifar10_ratio.tex}}
\caption{CIFAR-10 dataset.}
\end{subfigure}
\begin{subfigure} [b] {0.23\textwidth}
\resizebox{\linewidth}{!}{% This file was created by matlab2tikz.
% Minimal pgfplots version: 1.3
%
%The latest updates can be retrieved from
%  http://www.mathworks.com/matlabcentral/fileexchange/22022-matlab2tikz
%where you can also make suggestions and rate matlab2tikz.
%
\begin{tikzpicture}

\begin{axis}[%
width=4.520833in,
height=3.565625in,
at={(0.758333in,0.48125in)},
scale only axis,
separate axis lines,
every outer x axis line/.append style={black},
every x tick label/.append style={font=\color{black}},
xmin=0,
xmax=300,
every outer y axis line/.append style={black},
every y tick label/.append style={font=\color{black}},
ymin=1,
ymax=1.25
]
\addplot [color=blue,dashed,line width=2.0pt,forget plot]
  table[row sep=crcr]{%
1	1\\
2	1.01402594123662\\
3	1.01860018629882\\
4	1.02136588262184\\
5	1.02335912993182\\
6	1.02515942057121\\
7	1.02917141767784\\
8	1.02953414636905\\
9	1.03207518184767\\
10	1.03428217991716\\
11	1.03643850219783\\
12	1.03866055285582\\
13	1.04074449611849\\
14	1.04202158645743\\
15	1.04499552759349\\
16	1.0461171115014\\
17	1.04636755501581\\
18	1.04810932554616\\
19	1.04918674016273\\
20	1.05113667167068\\
21	1.05178621403767\\
22	1.05239587953982\\
23	1.05279307554445\\
24	1.05364747945582\\
25	1.05476757368529\\
26	1.05572014513903\\
27	1.05645264802599\\
28	1.05771964532249\\
29	1.05940406559988\\
30	1.06160946679535\\
31	1.06248385962575\\
32	1.06285653740383\\
33	1.0639857600549\\
34	1.06467323255181\\
35	1.06506421935771\\
36	1.06648017111551\\
37	1.06673867043446\\
38	1.06753876028363\\
39	1.06843880521581\\
40	1.06896409430248\\
41	1.06988794215114\\
42	1.07066547390866\\
43	1.07167115589193\\
44	1.07239499885376\\
45	1.07376008453848\\
46	1.07456100859082\\
47	1.07489920009349\\
48	1.07592207315668\\
49	1.07670100795669\\
50	1.07719472038748\\
51	1.07775647485864\\
52	1.07881561543016\\
53	1.07955341509848\\
54	1.08006100220406\\
55	1.08089067753624\\
56	1.08142367675284\\
57	1.08261658574159\\
58	1.08323601326628\\
59	1.08388598391255\\
60	1.08492891782974\\
61	1.08633542996438\\
62	1.08647999712967\\
63	1.08741253473438\\
64	1.08788138944511\\
65	1.08918403944365\\
66	1.09053897301767\\
67	1.09180425238302\\
68	1.09279682488631\\
69	1.09331301473933\\
70	1.0935257807559\\
71	1.09477118866675\\
72	1.0959292409004\\
73	1.09749001800698\\
74	1.09775111873198\\
75	1.09792495084587\\
76	1.09869539136021\\
77	1.09930040851564\\
78	1.09969040325005\\
79	1.10038964246283\\
80	1.10132367626794\\
81	1.10155500985492\\
82	1.10268524132743\\
83	1.10294736751206\\
84	1.10387872067552\\
85	1.1046704523866\\
86	1.10503620759385\\
87	1.10596965638843\\
88	1.10657109461039\\
89	1.10673347357743\\
90	1.10746706472307\\
91	1.10809094695978\\
92	1.1089813181921\\
93	1.10921558022693\\
94	1.11002141274028\\
95	1.11075522269021\\
96	1.11182830345364\\
97	1.11337478897513\\
98	1.11357425833062\\
99	1.11391455579121\\
100	1.11438044628769\\
101	1.11490327899682\\
102	1.11553410764791\\
103	1.11609609607663\\
104	1.1165313997269\\
105	1.11758787951312\\
106	1.11797851538485\\
107	1.11809493987834\\
108	1.11928394469437\\
109	1.12084846470872\\
110	1.1208710102232\\
111	1.12179510838522\\
112	1.12256166759686\\
113	1.12288035098625\\
114	1.12349199936638\\
115	1.12428112934862\\
116	1.12523637161812\\
117	1.12570437736535\\
118	1.12619547916688\\
119	1.12682719571364\\
120	1.12715764285277\\
121	1.12764125277459\\
122	1.12797240969747\\
123	1.12858457808231\\
124	1.12923067521188\\
125	1.12941620310765\\
126	1.12997197014814\\
127	1.13044396424695\\
128	1.1308228419169\\
129	1.13118008024592\\
130	1.13202302880744\\
131	1.13278839453459\\
132	1.13320423798878\\
133	1.13376253098062\\
134	1.13439077025219\\
135	1.13523773487727\\
136	1.13583518736892\\
137	1.13635868874253\\
138	1.13727689672674\\
139	1.13765062428436\\
140	1.13799627845762\\
141	1.13847216307225\\
142	1.13862607827138\\
143	1.13934148266185\\
144	1.1412752899646\\
145	1.1418421573297\\
146	1.14224297288443\\
147	1.14254995633449\\
148	1.14298895916342\\
149	1.14385542031764\\
150	1.14410927340094\\
151	1.14493793612653\\
152	1.14505454482597\\
153	1.14585995604759\\
154	1.14660130199495\\
155	1.14678933487358\\
156	1.14756084103879\\
157	1.14785381246449\\
158	1.14887960220837\\
159	1.14921664232705\\
160	1.14974649116063\\
161	1.14981975868198\\
162	1.1509546279005\\
163	1.15113855057573\\
164	1.15159650215541\\
165	1.15181445913573\\
166	1.15263500482951\\
167	1.15295917213176\\
168	1.15411084335107\\
169	1.15424234770183\\
170	1.1550999839757\\
171	1.15550268822966\\
172	1.15585970224915\\
173	1.15623909865325\\
174	1.15634739262133\\
175	1.15724039908751\\
176	1.15777487466236\\
177	1.15860911936911\\
178	1.15874434142339\\
179	1.15988083133052\\
180	1.16008139137519\\
181	1.16032498619525\\
182	1.16059225597856\\
183	1.16134018044655\\
184	1.16167664254599\\
185	1.16204380819557\\
186	1.1621821525602\\
187	1.1631685778076\\
188	1.16338525829916\\
189	1.16360090744322\\
190	1.16444323962093\\
191	1.16470114354453\\
192	1.16532405749346\\
193	1.16562538143605\\
194	1.16602296336221\\
195	1.16655217613467\\
196	1.16673466825057\\
197	1.16809004750205\\
198	1.16854143978116\\
199	1.16898224264739\\
200	1.16917257128062\\
201	1.16999722659004\\
202	1.17039228805625\\
203	1.17059918133842\\
204	1.17093852807641\\
205	1.17155826890628\\
206	1.17196656439773\\
207	1.17261670306833\\
208	1.1728816276526\\
209	1.1732128851101\\
210	1.17344218760187\\
211	1.1740419405325\\
212	1.17435704859861\\
213	1.17514667121496\\
214	1.17539648667607\\
215	1.17593730117322\\
216	1.17635188423373\\
217	1.17662746628917\\
218	1.17719880053644\\
219	1.17737180051713\\
220	1.17829377571469\\
221	1.17889637340737\\
222	1.17914637060878\\
223	1.1802284220164\\
224	1.18038400141518\\
225	1.18072260959043\\
226	1.18120791815603\\
227	1.18164125319649\\
228	1.1818759684627\\
229	1.18267692499699\\
230	1.18315077537718\\
231	1.18356708457362\\
232	1.18393424178982\\
233	1.18417653047176\\
234	1.18485753467523\\
235	1.18510431266411\\
236	1.18550753752815\\
237	1.18566569754676\\
238	1.18615777901684\\
239	1.18701005720136\\
240	1.18713458326963\\
241	1.18797046836492\\
242	1.18807035167666\\
243	1.18880978502948\\
244	1.18904567594385\\
245	1.18927428101349\\
246	1.1900888797356\\
247	1.19020959622425\\
248	1.19036613201472\\
249	1.19128840169807\\
250	1.19133646019734\\
251	1.19170069267273\\
252	1.19206449097626\\
253	1.19259716158079\\
254	1.19300181997385\\
255	1.1935179906621\\
256	1.19381204738234\\
257	1.19412342161471\\
258	1.19470950967615\\
259	1.19478262146931\\
260	1.19573376752855\\
261	1.19637013421703\\
262	1.19664979107844\\
263	1.19677559243842\\
264	1.19709879147346\\
265	1.1972908913096\\
266	1.19784380456999\\
267	1.19868494966468\\
268	1.19957097557015\\
269	1.19986809898797\\
270	1.20000519174864\\
271	1.20013561931903\\
272	1.20077492210147\\
273	1.20102359543593\\
274	1.20162076353498\\
275	1.20176742363714\\
276	1.20235730216598\\
277	1.20260003782618\\
278	1.20293435305855\\
279	1.20332523886745\\
280	1.20355717371458\\
281	1.20397728156774\\
282	1.20422470686168\\
283	1.2049743435993\\
284	1.20551949715247\\
285	1.20553526252147\\
286	1.20610724500622\\
287	1.20667187713318\\
288	1.20726245165937\\
289	1.20750811455006\\
290	1.20791349812692\\
291	1.20816136470336\\
292	1.208659036781\\
293	1.2087305060451\\
294	1.20906374636384\\
295	1.20985192160362\\
296	1.21007096439419\\
297	1.21053270915248\\
298	1.21074088342348\\
299	1.21096660069556\\
300	1.21148780357198\\
};
\end{axis}
\end{tikzpicture}%}
\caption{RCV1 dataset.}
\end{subfigure}
\begin{subfigure} [b] {0.23\textwidth}
\resizebox{\linewidth}{!}{% This file was created by matlab2tikz.
% Minimal pgfplots version: 1.3
%
%The latest updates can be retrieved from
%  http://www.mathworks.com/matlabcentral/fileexchange/22022-matlab2tikz
%where you can also make suggestions and rate matlab2tikz.
%
\begin{tikzpicture}

\begin{axis}[%
width=4.520833in,
height=3.565625in,
at={(0.758333in,0.48125in)},
scale only axis,
separate axis lines,
every outer x axis line/.append style={black},
every x tick label/.append style={font=\color{black}},
xmin=0,
xmax=300,
every outer y axis line/.append style={black},
every y tick label/.append style={font=\color{black}},
ymin=1,
ymax=1.15
]
\addplot [color=blue,dashed,line width=2.0pt,forget plot]
  table[row sep=crcr]{%
1	1\\
2	1.00764363032567\\
3	1.0100298596894\\
4	1.01081994179907\\
5	1.01258567590885\\
6	1.01360862241832\\
7	1.01444154481784\\
8	1.01661669819033\\
9	1.01838330878142\\
10	1.02079958215662\\
11	1.02222200506322\\
12	1.02266742418701\\
13	1.02403673602565\\
14	1.02468147241925\\
15	1.02522871033024\\
16	1.02642469925982\\
17	1.02851403286365\\
18	1.02906064433087\\
19	1.02987786405084\\
20	1.03075404028508\\
21	1.03093204660281\\
22	1.03273224075624\\
23	1.03336243638006\\
24	1.03353451262843\\
25	1.03423781609039\\
26	1.03495171623266\\
27	1.0354477536871\\
28	1.03640763898119\\
29	1.03707363086146\\
30	1.03783908193518\\
31	1.0387103300997\\
32	1.03899110028361\\
33	1.04001530702844\\
34	1.04058787972682\\
35	1.04173347141161\\
36	1.04232074970171\\
37	1.04273503353955\\
38	1.04318831462259\\
39	1.0439664432818\\
40	1.0440962044802\\
41	1.04498067135114\\
42	1.04528177741218\\
43	1.04572123638068\\
44	1.04587839058173\\
45	1.04621675827224\\
46	1.04639120921256\\
47	1.04725942198439\\
48	1.04787040449617\\
49	1.04789470155603\\
50	1.04867093195344\\
51	1.04903068977486\\
52	1.04938770781163\\
53	1.04980149207588\\
54	1.05059804418239\\
55	1.05097576097213\\
56	1.05159287245736\\
57	1.0518952583737\\
58	1.0523895857696\\
59	1.05302036930488\\
60	1.05309828142107\\
61	1.05378686729241\\
62	1.05412132095322\\
63	1.05451980518256\\
64	1.05525743462364\\
65	1.05551123691315\\
66	1.05625405487401\\
67	1.05646076978823\\
68	1.05675560734493\\
69	1.0574617906828\\
70	1.05772662138812\\
71	1.05845584742314\\
72	1.05893985271513\\
73	1.05937037320471\\
74	1.05965093602382\\
75	1.05995170722184\\
76	1.06053301259959\\
77	1.06066594957827\\
78	1.06118210657195\\
79	1.06157651851837\\
80	1.06172919834156\\
81	1.06225061333682\\
82	1.06249525135613\\
83	1.06300079764613\\
84	1.06333275149717\\
85	1.06381097364858\\
86	1.06434228884624\\
87	1.06443103691288\\
88	1.06514276836818\\
89	1.06553128192411\\
90	1.06561160217139\\
91	1.06604031833627\\
92	1.06630265474319\\
93	1.06654143945486\\
94	1.06752083809674\\
95	1.06766430956545\\
96	1.0678435288967\\
97	1.06833054721836\\
98	1.06856127214906\\
99	1.06862477097176\\
100	1.0690556067574\\
101	1.06960555465629\\
102	1.06996332424667\\
103	1.07023481229943\\
104	1.07069143997386\\
105	1.07112146283799\\
106	1.07161576544015\\
107	1.07191144218018\\
108	1.07221384478878\\
109	1.07234541976691\\
110	1.07268359091049\\
111	1.07281824995372\\
112	1.07402701998544\\
113	1.07437539195649\\
114	1.07446005824482\\
115	1.07471681086318\\
116	1.07500131740494\\
117	1.07532682411356\\
118	1.07556329996778\\
119	1.07587624382074\\
120	1.07610793855072\\
121	1.07686976417054\\
122	1.07695850500073\\
123	1.07729736190987\\
124	1.07755306772222\\
125	1.07796389423833\\
126	1.07837072565892\\
127	1.07862462487484\\
128	1.07913668901083\\
129	1.07916852826486\\
130	1.07953507333155\\
131	1.07979760514636\\
132	1.07993685412849\\
133	1.08011719102465\\
134	1.08057557833451\\
135	1.08105686820527\\
136	1.08141677283348\\
137	1.08160195198209\\
138	1.08191640181134\\
139	1.0823697377538\\
140	1.08268039767846\\
141	1.08314228969992\\
142	1.08353905348147\\
143	1.08372097878338\\
144	1.08396081923586\\
145	1.08436609346403\\
146	1.08472809883189\\
147	1.08479556840108\\
148	1.08500015919969\\
149	1.08544329930966\\
150	1.08556958422347\\
151	1.08576429468872\\
152	1.08588983049005\\
153	1.08617377833275\\
154	1.08678597444012\\
155	1.08704269333432\\
156	1.08729481324209\\
157	1.08745845301619\\
158	1.08803924972945\\
159	1.08859108387523\\
160	1.08881468210782\\
161	1.08907575106415\\
162	1.08960664364109\\
163	1.09016412158658\\
164	1.09039526840133\\
165	1.09071923128866\\
166	1.09112801514618\\
167	1.09116643897274\\
168	1.09148716845467\\
169	1.09163076309517\\
170	1.09202175283693\\
171	1.09223603394016\\
172	1.09278859291749\\
173	1.09298030401978\\
174	1.09316938614905\\
175	1.09351084840128\\
176	1.09398746679083\\
177	1.09425309595358\\
178	1.09437144826808\\
179	1.09457466250884\\
180	1.09467694084072\\
181	1.09519565004073\\
182	1.09545391104363\\
183	1.0957075885235\\
184	1.09589366255832\\
185	1.09628423339728\\
186	1.09637636838226\\
187	1.09682218090567\\
188	1.09701859558139\\
189	1.09724592809137\\
190	1.09768246529776\\
191	1.09807984878813\\
192	1.09853275499896\\
193	1.09879315642251\\
194	1.09908961603971\\
195	1.09924252999747\\
196	1.09958757327129\\
197	1.0996055624724\\
198	1.09971047236197\\
199	1.10050731137848\\
200	1.10083484409486\\
201	1.10104450485166\\
202	1.10131688621673\\
203	1.10176856311282\\
204	1.10213330085499\\
205	1.10231955849988\\
206	1.10266092415062\\
207	1.102828864202\\
208	1.10292450277065\\
209	1.10312405120266\\
210	1.10332273198501\\
211	1.10339089666655\\
212	1.10377417865116\\
213	1.10430585371976\\
214	1.10457457045024\\
215	1.10484931909577\\
216	1.10548593617232\\
217	1.10556047321576\\
218	1.10607548096043\\
219	1.10624181010622\\
220	1.10651906076281\\
221	1.10675129647097\\
222	1.10705782012385\\
223	1.1075103659776\\
224	1.10798637271278\\
225	1.10842981727213\\
226	1.1086077137217\\
227	1.10877305650748\\
228	1.10908325550179\\
229	1.10933205348064\\
230	1.1096417500662\\
231	1.11002776840479\\
232	1.11018164058552\\
233	1.11051231094368\\
234	1.11114844375829\\
235	1.11126055277591\\
236	1.11157575707982\\
237	1.11183250035875\\
238	1.11234291463443\\
239	1.11247604573677\\
240	1.11272275656859\\
241	1.11291081033135\\
242	1.11315683863619\\
243	1.11363270384888\\
244	1.11404696088654\\
245	1.1144084523622\\
246	1.11482194386537\\
247	1.11529946593139\\
248	1.11552251668828\\
249	1.11563748020249\\
250	1.116030506359\\
251	1.11619217609509\\
252	1.11651379453544\\
253	1.11689409190341\\
254	1.11730634751028\\
255	1.11765869210397\\
256	1.11782203421496\\
257	1.11811575675947\\
258	1.1184784617473\\
259	1.11883390230149\\
260	1.11901606301368\\
261	1.11949017402441\\
262	1.11956468629072\\
263	1.11992069606789\\
264	1.12021134697054\\
265	1.12052459578278\\
266	1.12066440919583\\
267	1.12126177828212\\
268	1.12138654671953\\
269	1.12174019172351\\
270	1.12228400347148\\
271	1.12245688420446\\
272	1.12252826079051\\
273	1.12281475759836\\
274	1.1231866744213\\
275	1.12339657456568\\
276	1.12379995956283\\
277	1.12390917233097\\
278	1.12442376839572\\
279	1.12457096652883\\
280	1.12479581404996\\
281	1.12506229899095\\
282	1.12516764961205\\
283	1.12595185730551\\
284	1.12614563543863\\
285	1.12669444542658\\
286	1.12703148473383\\
287	1.12707396210777\\
288	1.12753415002105\\
289	1.12759912134388\\
290	1.12794876604827\\
291	1.12808710464556\\
292	1.12867640535501\\
293	1.1288894180263\\
294	1.12935513337134\\
295	1.12952470183352\\
296	1.12987309066823\\
297	1.13029987252362\\
298	1.13073598949885\\
299	1.13076559292285\\
300	1.13121401719107\\
};
\end{axis}
\end{tikzpicture}%}
\caption{Real-sim dataset.}
\end{subfigure}
\caption{Plot of the ratio \eqref{eq:ratioRidge} as a function of $k$.}
\label{fig:ratio}
\end{figure}

\begin{figure}[t]
\centering
\begin{subfigure}{0.23 \textwidth}
\resizebox{\linewidth}{!}{% This file was created by matlab2tikz.
% Minimal pgfplots version: 1.3
%
%The latest updates can be retrieved from
%  http://www.mathworks.com/matlabcentral/fileexchange/22022-matlab2tikz
%where you can also make suggestions and rate matlab2tikz.
%
\begin{tikzpicture}

\begin{axis}[%
width=4.520833in,
height=3.565625in,
at={(0.758333in,0.48125in)},
scale only axis,
separate axis lines,
every outer x axis line/.append style={black},
every x tick label/.append style={font=\color{black}},
xmin=0,
xmax=30,
every outer y axis line/.append style={black},
every y tick label/.append style={font=\color{black}},
ymode=log,
ymin=0.0001,
ymax=0.1,
yminorticks=true,
legend style={legend cell align=left,align=left,draw=black}
]
\addplot [color=red,dashed,line width=2.0pt]
  table[row sep=crcr]{%
1	0.0538393434159012\\
2	0.00660045369356519\\
3	0.00356257522605144\\
4	0.00294328053112055\\
5	0.00263111779242342\\
6	0.00242552733462194\\
7	0.00227520305036721\\
8	0.00216006383514539\\
9	0.00206779945714465\\
10	0.00199141365857574\\
11	0.00192682701870223\\
12	0.00187108038315407\\
13	0.0018220902546056\\
14	0.00177867908151379\\
15	0.00173980287730807\\
16	0.0017045170239793\\
17	0.00167232230703335\\
18	0.0016427363082334\\
19	0.00161538177929311\\
20	0.00158997372935995\\
21	0.00156624021619109\\
22	0.00154399265030189\\
23	0.00152304742362335\\
24	0.00150326146908378\\
25	0.00148450564458936\\
26	0.00146670591642939\\
27	0.00144974356392886\\
28	0.00143356639793385\\
29	0.00141807921962349\\
30	0.00140323877542597\\
};
\addlegendentry{SVRG};

\addplot [color=green,solid,line width=2.0pt,mark=asterisk,mark options={solid}]
  table[row sep=crcr]{%
1	0.073217593011316\\
2	0.0105108595835415\\
3	0.00382509900771853\\
4	0.00245196935899098\\
5	0.00187398736590881\\
6	0.00154316278604149\\
7	0.00132401688058461\\
8	0.00116616848570685\\
9	0.00104508844414814\\
10	0.000948268283222643\\
11	0.000868875424448861\\
12	0.000801904159600218\\
13	0.000744334273864729\\
14	0.000694136547185975\\
15	0.000649819596683394\\
16	0.000610350051494594\\
17	0.000574799988738084\\
18	0.000542638756194903\\
19	0.000513333507702244\\
20	0.000486435992978536\\
21	0.000461615196020323\\
22	0.000438617754143109\\
23	0.000417289564377114\\
24	0.000397406327238263\\
25	0.000378814568791733\\
26	0.000361377274478812\\
27	0.000345034677303666\\
28	0.000329628679350337\\
29	0.00031512928121846\\
30	0.000301401954125571\\
};
\addlegendentry{SCSVRG-k=30};

\end{axis}
\end{tikzpicture}%}
\caption{Synthetic with linear decay}
\label{fig:emp1_a1}
\end{subfigure}
\begin{subfigure}{0.23 \textwidth}
\resizebox{\linewidth}{!}{% This file was created by matlab2tikz.
% Minimal pgfplots version: 1.3
%
%The latest updates can be retrieved from
%  http://www.mathworks.com/matlabcentral/fileexchange/22022-matlab2tikz
%where you can also make suggestions and rate matlab2tikz.
%
\begin{tikzpicture}

\begin{axis}[%
width=4.520833in,
height=3.565625in,
at={(0.758333in,0.48125in)},
scale only axis,
separate axis lines,
every outer x axis line/.append style={black},
every x tick label/.append style={font=\color{black}},
xmin=0,
xmax=20,
every outer y axis line/.append style={black},
every y tick label/.append style={font=\color{black}},
ymode=log,
ymin=1e-07,
ymax=0.1,
yminorticks=true,
legend style={legend cell align=left,align=left,draw=black}
]
\addplot [color=red,dashed,line width=2.0pt]
  table[row sep=crcr]{%
1	0.0613350791047182\\
2	0.00258992793774234\\
3	0.000215870503118625\\
4	9.44003570623559e-05\\
5	7.4342003280394e-05\\
6	6.43801509720328e-05\\
7	5.79671251181172e-05\\
8	5.34381564703292e-05\\
9	5.00758542056152e-05\\
10	4.74494161810954e-05\\
11	4.53650226210408e-05\\
12	4.36711818655759e-05\\
13	4.22535375144096e-05\\
14	4.1032219601625e-05\\
15	3.99736889904112e-05\\
16	3.90375466624831e-05\\
17	3.81970663414733e-05\\
18	3.74379027231796e-05\\
19	3.67445426901359e-05\\
20	3.61057959653308e-05\\
};
\addlegendentry{SVRG};

\addplot [color=green,solid,line width=2.0pt,mark=asterisk,mark options={solid}]
  table[row sep=crcr]{%
1	0.0702959354199167\\
2	0.00351665505772703\\
3	0.000307919752233233\\
4	9.12207593745393e-05\\
5	4.68840642005881e-05\\
6	2.79466622814372e-05\\
7	1.77093742513403e-05\\
8	1.1644566034758e-05\\
9	7.88274134751361e-06\\
10	5.45615931730331e-06\\
11	3.83884128844075e-06\\
12	2.73967771742024e-06\\
13	1.98320376360289e-06\\
14	1.45447174601087e-06\\
15	1.08317544816276e-06\\
16	8.19798640993609e-07\\
17	6.31193395333671e-07\\
18	4.96541989163622e-07\\
19	3.99492264137324e-07\\
20	3.29280688139699e-07\\
};
\addlegendentry{SCSVRG-k=30};

\end{axis}
\end{tikzpicture}%}
\caption{Synthetic with quadratic decay}
\label{fig:emp1_a2}
\end{subfigure}
\begin{subfigure}{0.23 \textwidth}
\resizebox{\linewidth}{!}{% This file was created by matlab2tikz.
% Minimal pgfplots version: 1.3
%
%The latest updates can be retrieved from
%  http://www.mathworks.com/matlabcentral/fileexchange/22022-matlab2tikz
%where you can also make suggestions and rate matlab2tikz.
%
\begin{tikzpicture}

\begin{axis}[%
width=4.520833in,
height=3.565625in,
at={(0.758333in,0.48125in)},
scale only axis,
separate axis lines,
every outer x axis line/.append style={black},
every x tick label/.append style={font=\color{black}},
xmin=0,
xmax=60,
every outer y axis line/.append style={black},
every y tick label/.append style={font=\color{black}},
ymode=log,
ymin=0.0001,
ymax=0.1,
yminorticks=true,
legend style={legend cell align=left,align=left,draw=black}
]
\addplot [color=red,dashed,line width=2.0pt]
  table[row sep=crcr]{%
1	0.0150789611775456\\
2	0.00354206677063262\\
3	0.00214557995725254\\
4	0.00167613931146035\\
5	0.00144245849252583\\
6	0.00129583239118017\\
7	0.00119348823075545\\
8	0.00111559396253519\\
9	0.00105340884817912\\
10	0.00100144123938174\\
11	0.000957023911171362\\
12	0.000918005159840043\\
13	0.000883425359825885\\
14	0.000852282026910262\\
15	0.000824123163060492\\
16	0.000798394209163528\\
17	0.000774826820521074\\
18	0.000753089011537464\\
19	0.000732876438174893\\
20	0.000714140125028309\\
21	0.000696553432434202\\
22	0.000680097332748583\\
23	0.000664632819700348\\
24	0.000650053833001846\\
25	0.000636300327884684\\
26	0.00062327100073134\\
27	0.000610910431453421\\
28	0.000599176822405303\\
29	0.000587993512878304\\
30	0.000577354364103985\\
31	0.000567177564091442\\
32	0.000557451809463316\\
33	0.000548111139653987\\
34	0.000539169020752475\\
35	0.000530595992352417\\
36	0.000522338322880317\\
37	0.000514395442765048\\
38	0.00050672680945274\\
39	0.000499345705041537\\
40	0.000492201903129152\\
41	0.000485309907866716\\
42	0.000478634269049066\\
43	0.000472169693750801\\
44	0.00046590752494928\\
45	0.00045983647129913\\
46	0.000453948989657935\\
47	0.000448227313625052\\
48	0.000442664537700782\\
49	0.000437263739641203\\
50	0.000432008696658062\\
51	0.000426887235559933\\
52	0.000421911962491736\\
53	0.00041706542347604\\
54	0.000412328003331006\\
55	0.000407707678777763\\
56	0.00040320286936453\\
57	0.000398810265568345\\
58	0.000394512311052253\\
59	0.000390313011430957\\
60	0.000386211926312593\\
};
\addlegendentry{SVRG};

\addplot [color=green,solid,line width=2.0pt,mark=asterisk,mark options={solid}]
  table[row sep=crcr]{%
1	0.0257921601567196\\
2	0.00334665905955987\\
3	0.00126752127816147\\
4	0.000892932730369181\\
5	0.000740376851195684\\
6	0.000648963767591215\\
7	0.000584339707401016\\
8	0.000534925876653783\\
9	0.000495138665052242\\
10	0.000461963808922994\\
11	0.000433732240661455\\
12	0.000409115863059639\\
13	0.000387480918167649\\
14	0.000368115820117268\\
15	0.000350779442670687\\
16	0.000335129591017003\\
17	0.000320850827819699\\
18	0.000307765221537512\\
19	0.000295693589017487\\
20	0.00028451447608234\\
21	0.000274208135497833\\
22	0.000264592788987883\\
23	0.000255628472531827\\
24	0.000247206079950865\\
25	0.000239345220455975\\
26	0.000231999475590619\\
27	0.000225043947508677\\
28	0.000218482027159156\\
29	0.000212284181248718\\
30	0.000206409420259138\\
31	0.000200853112546862\\
32	0.000195573973137853\\
33	0.000190585793551204\\
34	0.000185823088469705\\
35	0.000181273679881772\\
36	0.000176937088070581\\
37	0.000172805760680614\\
38	0.000168837883144632\\
39	0.000165036741472682\\
40	0.000161394361989364\\
41	0.000157897085525119\\
42	0.000154541739093392\\
43	0.000151308030102534\\
44	0.000148206589007285\\
45	0.000145211095279124\\
46	0.000142328679856141\\
47	0.000139549742018075\\
48	0.000136863720785141\\
49	0.000134263336105153\\
50	0.000131742388257777\\
51	0.000129307422833444\\
52	0.000126952569079755\\
53	0.000124662973828432\\
54	0.000122445150838313\\
55	0.000120290316082988\\
56	0.000118197729388717\\
57	0.000116164472912822\\
58	0.000114184777909476\\
59	0.000112258009851776\\
60	0.000110382109267532\\
};
\addlegendentry{SCSVRG-k=30};

\end{axis}
\end{tikzpicture}%}
\caption{MNIST dataset.}
\label{fig:emp1_b}
\end{subfigure}
\begin{subfigure}{0.23 \textwidth}
\resizebox{\linewidth}{!}{% This file was created by matlab2tikz.
% Minimal pgfplots version: 1.3
%
%The latest updates can be retrieved from
%  http://www.mathworks.com/matlabcentral/fileexchange/22022-matlab2tikz
%where you can also make suggestions and rate matlab2tikz.
%
\begin{tikzpicture}

\begin{axis}[%
width=4.520833in,
height=3.565625in,
at={(0.758333in,0.48125in)},
scale only axis,
separate axis lines,
every outer x axis line/.append style={black},
every x tick label/.append style={font=\color{black}},
xmin=0,
xmax=60,
every outer y axis line/.append style={black},
every y tick label/.append style={font=\color{black}},
ymode=log,
ymin=0.0001,
ymax=0.1,
yminorticks=true,
legend style={legend cell align=left,align=left,draw=black}
]
\addplot [color=red,dashed,line width=2.0pt]
  table[row sep=crcr]{%
1	0.0186296075823393\\
2	0.0143375240400616\\
3	0.0130793327155234\\
4	0.0122809567141488\\
5	0.0116814495712727\\
6	0.0111996934101315\\
7	0.0107957894171639\\
8	0.0104493417426824\\
9	0.0101457061663853\\
10	0.00987588057536054\\
11	0.00963303478152444\\
12	0.00941252860708297\\
13	0.00921036072941633\\
14	0.00902408566864477\\
15	0.00885119651528166\\
16	0.0086899137974486\\
17	0.00853885223989886\\
18	0.00839680253144198\\
19	0.00826284857359222\\
20	0.00813597796708676\\
21	0.00801560221991476\\
22	0.0079011105756745\\
23	0.00779193788799659\\
24	0.00768757395847736\\
25	0.00758769081933175\\
26	0.00749187226528275\\
27	0.00739985599554793\\
28	0.00731131834992654\\
29	0.0072260540161696\\
30	0.00714380437504802\\
31	0.00706439251418955\\
32	0.00698763770856375\\
33	0.00691338830771454\\
34	0.00684146876745384\\
35	0.00677171599358245\\
36	0.00670411018092237\\
37	0.00663838668949934\\
38	0.0065745229018187\\
39	0.00651242336760921\\
40	0.00645197923315705\\
41	0.00639310665325193\\
42	0.00633574242282481\\
43	0.00627982514558745\\
44	0.00622525331100587\\
45	0.00617200203320101\\
46	0.00612003185496107\\
47	0.00606921641794522\\
48	0.00601954013884298\\
49	0.00597097641501598\\
50	0.00592343173620768\\
51	0.00587691999716083\\
52	0.00583135645572602\\
53	0.00578671253265933\\
54	0.0057429799732393\\
55	0.00570008868848843\\
56	0.005658017558201\\
57	0.00561675338924245\\
58	0.00557626099614311\\
59	0.00553650255278137\\
60	0.0054974668605336\\
};
\addlegendentry{SVRG};

\addplot [color=green,solid,line width=2.0pt,mark=asterisk,mark options={solid}]
  table[row sep=crcr]{%
1	0.0213084096609014\\
2	0.00817118204241257\\
3	0.0061678310487524\\
4	0.00528119914178687\\
5	0.00469132875054612\\
6	0.00424015548351686\\
7	0.00387713765610986\\
8	0.0035724446863597\\
9	0.00331222023293032\\
10	0.00308556381941338\\
11	0.00288604117367081\\
12	0.00270815917137313\\
13	0.00254818476765872\\
14	0.0024034782279434\\
15	0.0022717960866282\\
16	0.00215111140467694\\
17	0.00204044839744361\\
18	0.00193835517871899\\
19	0.00184374579112356\\
20	0.0017559329806911\\
21	0.00167408795211588\\
22	0.00159764202816109\\
23	0.00152597350024242\\
24	0.00145880695301431\\
25	0.0013953270917898\\
26	0.00133570284978879\\
27	0.00127919955254635\\
28	0.00122580646134873\\
29	0.00117526998572409\\
30	0.00112730905142339\\
31	0.00108175348596656\\
32	0.00103847918281719\\
33	0.000997255943996023\\
34	0.000957914835818752\\
35	0.000920457831813437\\
36	0.000884642664287449\\
37	0.000850306424375002\\
38	0.000817629250931151\\
39	0.000786134204672773\\
40	0.000756084638948884\\
41	0.000727190935255084\\
42	0.000699555250948503\\
43	0.000672941906951219\\
44	0.000647364460603761\\
45	0.000622797440585854\\
46	0.000599082910328508\\
47	0.00057628093961265\\
48	0.00055426415574461\\
49	0.000533121782449963\\
50	0.000512688456468124\\
51	0.000492976499983033\\
52	0.000473998094681072\\
53	0.000455620004076507\\
54	0.000437912175034638\\
55	0.000420711033516874\\
56	0.000404067715201073\\
57	0.000388033287088141\\
58	0.000372496976079939\\
59	0.000357439454006847\\
60	0.000342842847169622\\
};
\addlegendentry{-k=30};

\end{axis}
\end{tikzpicture}%}
\caption{CIFAR-10 dataset.}
\label{fig:emp1_e}
\end{subfigure}

\begin{subfigure}{0.23 \textwidth}
\resizebox{\linewidth}{!}{% This file was created by matlab2tikz.
% Minimal pgfplots version: 1.3
%
%The latest updates can be retrieved from
%  http://www.mathworks.com/matlabcentral/fileexchange/22022-matlab2tikz
%where you can also make suggestions and rate matlab2tikz.
%
\begin{tikzpicture}

\begin{axis}[%
width=4.520833in,
height=3.565625in,
at={(0.758333in,0.48125in)},
scale only axis,
separate axis lines,
every outer x axis line/.append style={black},
every x tick label/.append style={font=\color{black}},
xmin=0,
xmax=60,
every outer y axis line/.append style={black},
every y tick label/.append style={font=\color{black}},
ymode=log,
ymin=0.001,
ymax=1,
yminorticks=true,
legend style={legend cell align=left,align=left,draw=black}
]
\addplot [color=red,dashed,line width=2.0pt]
  table[row sep=crcr]{%
1	0.916574524785411\\
2	0.52840088265196\\
3	0.31493687877865\\
4	0.19594309063101\\
5	0.129624266960602\\
6	0.0901834688659522\\
7	0.0670214604853369\\
8	0.0524727790705476\\
9	0.0428731401489877\\
10	0.0363064208237733\\
11	0.0315922287480749\\
12	0.0280312872210972\\
13	0.0252562783462522\\
14	0.0230188201909587\\
15	0.0211776719782437\\
16	0.0196136544773856\\
17	0.0182653294047779\\
18	0.0170906059658277\\
19	0.0160478543229861\\
20	0.0151155415737028\\
21	0.0142708887417341\\
22	0.0135076410670734\\
23	0.012809991168269\\
24	0.0121667239603847\\
25	0.0115724867217586\\
26	0.0110231336112454\\
27	0.0105120881926729\\
28	0.0100353110364391\\
29	0.00958958852967277\\
30	0.00917084835958165\\
31	0.00877660370679339\\
32	0.00840521964181948\\
33	0.00805475563292249\\
34	0.00772276309639922\\
35	0.00740826185707184\\
36	0.00710878265220812\\
37	0.00682467910651802\\
38	0.00655391338008173\\
39	0.00629543368365928\\
40	0.00604886447236191\\
41	0.00581262751785399\\
42	0.00558663748675809\\
43	0.00537014630936102\\
44	0.00516224137966277\\
45	0.00496258983886475\\
46	0.00477076953153538\\
47	0.00458612861449274\\
48	0.00440836537672725\\
49	0.00423716985676486\\
50	0.00407200529117859\\
51	0.00391265203331662\\
52	0.00375855641211104\\
53	0.00360963024933481\\
54	0.00346562126485886\\
55	0.00332624049934941\\
56	0.00319127679108336\\
57	0.00306063619468092\\
58	0.00293373024824192\\
59	0.00281084137087932\\
60	0.00269169431315995\\
};
\addlegendentry{SVRG};

\addplot [color=green,solid,line width=2.0pt,mark=asterisk,mark options={solid}]
  table[row sep=crcr]{%
1	0.954215213391361\\
2	0.559810288012154\\
3	0.335641563930968\\
4	0.212227226758011\\
5	0.139945860036151\\
6	0.0971839150447275\\
7	0.0705386049100713\\
8	0.0539682814716079\\
9	0.0430331982710944\\
10	0.0355370971854723\\
11	0.0301950074972396\\
12	0.0263406664596724\\
13	0.0233670302738398\\
14	0.0210280872415024\\
15	0.0191439644109428\\
16	0.0175680151447421\\
17	0.0162317444490338\\
18	0.0150766149942043\\
19	0.0140681780100318\\
20	0.0131697095575149\\
21	0.0123681982678498\\
22	0.0116420488754387\\
23	0.0109831826546052\\
24	0.0103820708157317\\
25	0.00982620324763657\\
26	0.00931467894053463\\
27	0.00884007643859063\\
28	0.00839751354538978\\
29	0.00798628058652026\\
30	0.0075997642287052\\
31	0.00723747862233642\\
32	0.00689587090638668\\
33	0.00657402851540741\\
34	0.00626965568655613\\
35	0.00598194725995211\\
36	0.00570870525841678\\
37	0.00544894402647559\\
38	0.00520131267716063\\
39	0.00496570208343939\\
40	0.00474088747643412\\
41	0.00452613453222261\\
42	0.00432062231476952\\
43	0.00412364919233399\\
44	0.00393507808749096\\
45	0.00375392606967153\\
46	0.00357984177470225\\
47	0.00341246697703968\\
48	0.0032516203792612\\
49	0.00309650564721787\\
50	0.00294695738051739\\
51	0.00280262188253753\\
52	0.0026633843347215\\
53	0.00252883655324931\\
54	0.00239871724613417\\
55	0.00227272911838678\\
56	0.00215106426479715\\
57	0.00203304874175475\\
58	0.00191872234757044\\
59	0.0018075926349699\\
60	0.00169993547473131\\
};
\addlegendentry{SCSVRG-k=30};

\end{axis}
\end{tikzpicture}%}
\caption{RCV1 dataset.}
\label{fig:emp1_c}
\end{subfigure}
\begin{subfigure}{0.23 \textwidth}
\resizebox{\linewidth}{!}{% This file was created by matlab2tikz.
% Minimal pgfplots version: 1.3
%
%The latest updates can be retrieved from
%  http://www.mathworks.com/matlabcentral/fileexchange/22022-matlab2tikz
%where you can also make suggestions and rate matlab2tikz.
%
\begin{tikzpicture}

\begin{axis}[%
width=4.520833in,
height=3.565625in,
at={(0.758333in,0.48125in)},
scale only axis,
separate axis lines,
every outer x axis line/.append style={black},
every x tick label/.append style={font=\color{black}},
xmin=0,
xmax=60,
every outer y axis line/.append style={black},
every y tick label/.append style={font=\color{black}},
ymode=log,
ymin=1e-05,
ymax=1,
yminorticks=true,
legend style={legend cell align=left,align=left,draw=black}
]
\addplot [color=red,dashed,line width=2.0pt]
  table[row sep=crcr]{%
1	0.112814433361553\\
2	0.0328482177985387\\
3	0.0139481543378353\\
4	0.0079683617733573\\
5	0.00542152993762808\\
6	0.00407587864723688\\
7	0.00324820861089003\\
8	0.00267940622940975\\
9	0.00226673892443235\\
10	0.00195303601335955\\
11	0.00170661159660752\\
12	0.0015091166660732\\
13	0.00134719885444639\\
14	0.00121250404235496\\
15	0.0010989210481031\\
16	0.00100225918565899\\
17	0.000918957011169272\\
18	0.000846685810818738\\
19	0.000783423726223136\\
20	0.000727819753002193\\
21	0.000678479389172156\\
22	0.000634447218292797\\
23	0.000594958960465367\\
24	0.000559424029908745\\
25	0.000527262562608426\\
26	0.000498087034445323\\
27	0.00047157256715992\\
28	0.000447256635423429\\
29	0.000424905746204747\\
30	0.00040434800010939\\
31	0.000385359683378654\\
32	0.000367779189506139\\
33	0.000351493150598035\\
34	0.000336312486226169\\
35	0.00032217474269005\\
36	0.000308991631990906\\
37	0.000296610699918105\\
38	0.000284999695803756\\
39	0.00027407881787931\\
40	0.000263784091611416\\
41	0.000254088662577708\\
42	0.00024493678073273\\
43	0.000236266069896456\\
44	0.000228061566120541\\
45	0.000220262957971555\\
46	0.000212851317313918\\
47	0.000205802467149199\\
48	0.000199084901298663\\
49	0.000192676373306729\\
50	0.000186555635811179\\
51	0.000180707073401225\\
52	0.000175112349616015\\
53	0.000169748050210181\\
54	0.000164605687498778\\
55	0.000159671956467246\\
56	0.000154931114288513\\
57	0.000150373079145678\\
58	0.000145987158373995\\
59	0.000141763023608403\\
60	0.000137688167526344\\
};
\addlegendentry{SVRG};

\addplot [color=green,solid,line width=2.0pt,mark=asterisk,mark options={solid}]
  table[row sep=crcr]{%
1	0.376554740191531\\
2	0.211917317308113\\
3	0.114856601711015\\
4	0.06114728863235\\
5	0.033591690861051\\
6	0.0189533366889875\\
7	0.0110219827371994\\
8	0.00677271984052671\\
9	0.00429727992341172\\
10	0.00287873803856034\\
11	0.00203997731310389\\
12	0.00152165768997911\\
13	0.00118808605625793\\
14	0.000962095321151407\\
15	0.000805521917492801\\
16	0.000691875496288116\\
17	0.000605642438119114\\
18	0.000538755866594264\\
19	0.000484773563603276\\
20	0.000440147326375005\\
21	0.000402451254062984\\
22	0.000370290941613713\\
23	0.000342523577539847\\
24	0.000318143284936408\\
25	0.000296621434004336\\
26	0.0002774689090633\\
27	0.000260247110953314\\
28	0.00024471507256827\\
29	0.000230612557018423\\
30	0.000217756667495808\\
31	0.000205983171229843\\
32	0.000195157252943716\\
33	0.000185167801806711\\
34	0.000175898661049834\\
35	0.000167267586566253\\
36	0.000159229736231455\\
37	0.000151716623777054\\
38	0.000144681087300806\\
39	0.000138058050799333\\
40	0.000131811585392425\\
41	0.000125930671765827\\
42	0.000120358678933383\\
43	0.000115074226975181\\
44	0.000110078555343993\\
45	0.000105315646365657\\
46	0.000100782632005525\\
47	9.64578735320396e-05\\
48	9.23210869614027e-05\\
49	8.83732936383247e-05\\
50	8.45940099721693e-05\\
51	8.0990261745427e-05\\
52	7.75050761221072e-05\\
53	7.41571012642939e-05\\
54	7.09323875244389e-05\\
55	6.78273396314197e-05\\
56	6.48394916643383e-05\\
57	6.19549089342641e-05\\
58	5.91709368311516e-05\\
59	5.64815402122401e-05\\
60	5.38935398711748e-05\\
};
\addlegendentry{SCSVRG-k=30};

\end{axis}
\end{tikzpicture}%}
\caption{real-sim dataset.}
\label{fig:emp1_d}
\end{subfigure}

\caption{Convergence of Sketched Preconditioned SVRG vs SVRG. The $x$-axis is the number of epochs and the $y$-axis is the suboptimality, $L(\bar{w}_t)-\min_{w \in \reals^d}L(w)$, in logarithmic scale.}
\label{fig:emp1}
\end{figure}
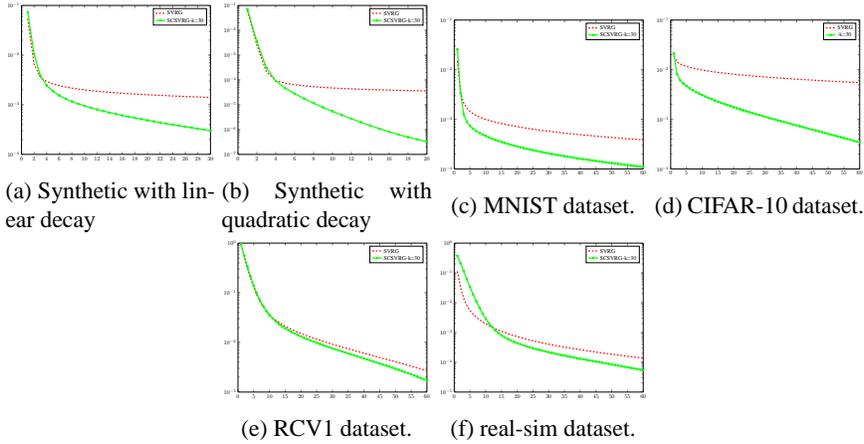

In this section we empirically demonstrate the gain of our method. We consider both regression problems and binary classifications tasks, where the square loss serves as a surrogate for the zero-one loss. We use the following datasets:
\begin{itemize}
\item \emph{Synthetic}: We draw two random $5000 \times 20000$ matrices, $X^{(1)}$ and $X^{(2)}$, whose singular vectors are drawn uniformly at random and the $q$-th singular value is $1/q$ and $1/q^2$, respectively. We then normalize the columns. For each $X=X^{(j)}$, we consider a regression problem, where the labels are generated as follows: we first draw a vector $w^\star \in \cN(0,1)^{5000}$ and then set $y_i = {w^\star}^\top X_{\cdot,i}+z_i$, where $z_i \sim \cN(0,0.1)$. 
\item \emph{MNIST}:\footnote{http://yann.lecun.com/exdb/mnist/} A subset of MNIST, corresponding to the digits $4$ and $7$, where the task is to distinguish between the two digits. Here, $n=12107, d=784$.
\item \emph{RCV1}:\footnote{https://www.csie.ntu.edu.tw/~cjlin/libsvmtools/datasets/} The Reuters RCV1 collection.  Here, $n=20242, d=47236$ and we consider a standard binary document classification task.
\item \emph{CIFAR-10}:\footnote{http://www.cs.toronto.edu/~kriz/cifar.html} Here, $n=50000, d=3072$. Following \cite{frostig2015regularizing}, the classification task is to distinguish between the animal categories to the automotive ones.
\item \emph{real-sim}:\footnote{https://www.csie.ntu.edu.tw/~cjlin/libsvmtools/datasets/} Here, $n=72309, d=20958$, and we consider a standard binary document classification task.
\end{itemize}

\subsection{Inspecting our theoretical speed-up} \label{sec:ratioEmp}
Recall that the ratio \eqref{eq:ratioRidge} quantifies our theoretical speedup.
% \begin{equation} \label{eq:ratioRidgeRevisit}
% \textrm{ratio} = \frac{\sum_{i=1}^d \lambda_i}{k\,\lambda_k+\sum_{i>k}\lambda_i}
% = \frac{\sum_{i=1}^k \lambda_i +\sum_{i>k}\lambda_i }{k\,\lambda_k ~~~~+\sum_{i>k}\lambda_i}~.
% \end{equation} 
Hence, we first empirically inspect the prefixes of the corresponding quantities (as a function of $k$) for each of the datasets (see \figref{fig:ratio}). We can see that while in MNIST and CIFAR-10 the ratio is large for small values of $k$, in RCV1 and real-sim the ratio increases very slowly (note that for the former two datasets we use logarithmic scale).
%We suspect that in general, the spectrum of classification tasks that use sparse bag-of-words representation decay slower than the spectrum of vision tasks.

% our assumption on the decay of the eigenvalues is correct. We focus on moderate data-sets for which we can depict the spectrum of $C$.
% In \figref{fig:ratio}, we depict this ratio (as a function of $k$) for the MNIST and CIFAR-10 data-sets. We see that the ratio is large for relatively small values of $k$, validating our assumption.
\subsection{Empirical advantage of Sketched Preconditioned SVRG} \label{sec:emp}
We now evaluate \algref{alg:SCSVRG} and compare it to the SVRG algorithm of \cite{xiao2014proximal}. To minimally affect the inherent condition number, we added only a slight amount of regularization, namely, $\lambda=10^{-8}$. The loss used is the square loss. The step size, $\eta$, is optimally tuned for each method. Similarly to previous work on SVRG~\cite{xiao2014proximal, johnson2013accelerating}, the size of each epoch, $m$, is proportional to the number of points, $n$. We minimally preprocessed the data by average normalization: each instance vector is divided by the average $\ell_2$-norm of the instances. The number of epochs is up to $60$. Note that in all cases we choose a small preconditioning parameter, namely $k=30$, so that the preprocessing time of \algref{alg:SCSVRG} is negligible. There is a clear correspondence between the ratios depicted in \figref{fig:ratio} and the actual speedup. In other words, the empirical results strongly affirm our theoretical results. 

% We depict the suboptimality of the algorithms as a function of the number of epochs in \figref{fig:emp1}. As can be seen, even for $k=30$, the sketched conditioning leads to a significant speedup on the MNIST and CIFAR datasets. For the document classification problems, the gain is smaller. We attribute this to a relatively slower decay of the eigenvalues in the bag-of-words text representation.

\section*{Acknowledgments} 
We thank Edo Liberty for helpful discussions.
The work is supported by ICRI-CI and by the European Research Council (TheoryDL project).

\newpage 

\bibliography{bib}
\bibliographystyle{plain}

\newpage
\appendix

\section{Omitted Proofs}
\begin{proof}  \textbf{(of \thmref{thm:condEffect})}
We first show that the average smoothness of $\tilde{L}$ is bounded by
\begin{equation} \label{eq:condEffectSmooth}
\frac{1}{n+d} \sum_{i=1}^{n+d} \tilde{\beta}_i \le  \tr \left(P^{-1/2} \left(C + \lambda I \right) P^{-1/2} \right)~.
\end{equation}
Note that for any $w$,
\[
\nabla^2 \tilde{\ell}_i(w) = \begin{cases} \frac{n+d}{n}\tx_i \tx_i^\top  & 1 \le i \le n, \\ \lambda (n+d) b_{i-n} b_{i-n}^\top  & n < i \le n+d~.  \end{cases}
\]
Therefore, using the fact that the spectral norm of a rank-$1$ psd matrix is equal to its trace, we obtain
\begin{align*}
\frac{1}{n+d} \sum_{i=1}^n \tilde{\beta}_i &= \frac{1}{n+d} \frac{n+d}{n} \sum_{i=1}^n \|\tx_i \tx_i^\top\| + \frac{1}{n+d}\lambda (n+d) \sum_{j=1}^d \|b_j b_j^\top\| \\
& = \frac{1}{n} \sum_{i=1}^n \tr(\tx_i \tx_i^\top) + \lambda \sum_{j=1}^d \tr(b_i b_i^\top) \\
&= \frac{1}{n}\tr( \sum_{i=1}^n P^{-1/2} x_i x_i^\top P^{-1/2}) + \lambda \, \tr \sum_{j=1}^d (P^{-1/2} e_i e_i^\top P^{-1/2}) \\
&=\tr(P^{-1/2} ( C + \lambda I)P^{-1/2})~.
\end{align*}
Hence, we deduce (\ref{eq:condEffectSmooth}).

We will conclude the theorem by showing that $\tilde{L}$ is $\lambda_d (P^{-1/2} (C + \lambda I)  P^{-1/2})$-strongly convex. Indeed, a similar calculation shows that the Hessian of $L$ at any point $w$ is given by
\begin{align*}
\nabla^2 \tilde{L}(w) &=  P^{-1/2} (C+ \lambda I) P^{-1/2} ~.
\end{align*}
Hence, we conclude the claimed bound.
\end{proof}

\end{document}